\documentclass[10pt,twocolumn,letterpaper]{article}

\usepackage{cvpr}              % To produce the CAMERA-READY version
\usepackage{algorithm}
\usepackage{algorithmic}
\renewcommand{\algorithmicrequire}{\textbf{Input:}}
\renewcommand{\algorithmicensure}{\textbf{Output:}}
\newcommand{\INPUT}{\item[\algorithmicrequire]}
\newcommand{\OUTPUT}{\item[\algorithmicensure]}

\usepackage{amsmath}
\usepackage{amssymb}
\usepackage{amsfonts}

\usepackage{booktabs}
\usepackage{multirow}
\usepackage{subcaption}

\definecolor{light-gray}{gray}{0.9}
\usepackage{colortbl}

\newcommand{\bheading}[1]{{\vspace{0.1\baselineskip}\noindent{\textbf{#1}}}}
\newcommand{\lheading}[1]{{\vspace{0.1\baselineskip}\noindent{\textit{#1}}}}

\usepackage{amsthm}
\theoremstyle{definition}
\newtheorem{definition}{Definition}[section]
\newtheorem{theorem}{Theorem}[section]
\newtheorem{lemma}{Lemma}[section]

\newtheorem{remark}{Remark}[section]

\definecolor{cvprblue}{rgb}{0.21,0.49,0.74}
\usepackage[pagebackref,breaklinks,colorlinks,allcolors=cvprblue]{hyperref}

\def\paperID{1603} % *** Enter the Paper ID here
\def\confName{CVPR}
\def\confYear{2025}

\title{Few-for-Many Personalized Federated Learning}

\author{Ping Guo$^{1,6}$, Tiantian Zhang$^{2}\thanks{Corresponding author: {\tt\small ttzhang@hkmu.edu.hk}}$, Xi Lin$^{3}$, Xiang Li$^{4}$, Zhi-Ri Tang$^{5}$, Qingfu Zhang$^{1,6}\thanks{Corresponding author: {\tt\small qingfu.zhang@cityu.edu.hk}}$\\
 $^1$City University of Hong Kong; \ \ $^2$Hong Kong Metropolitan University; \\ $^3$Xi'an Jiaotong University; $^4$Southeast University; $^5$Jinan University; \\ $^6$CityU Shenzhen Research Institute\\
}

\begin{document}
\maketitle
\begin{abstract}

    Personalized Federated Learning (PFL) aims to train customized models for clients with highly heterogeneous data distributions while preserving data privacy. Existing approaches often rely on heuristics like clustering or model interpolation, which lack principled mechanisms for balancing heterogeneous client objectives. Serving $M$ clients with distinct data distributions is inherently a multi-objective optimization problem, where achieving optimal personalization ideally requires $M$ distinct models on the Pareto front. However, maintaining $M$ separate models poses significant scalability challenges in federated settings with hundreds or thousands of clients. To address this challenge, we reformulate PFL as a few-for-many optimization problem that maintains only $K$ shared server models ($K \ll M$) to collectively serve all $M$ clients. We prove that this framework achieves near-optimal personalization: the approximation error diminishes as $K$ increases and each client's model converges to each client's optimum as data grows. Building on this reformulation, we propose FedFew, a practical algorithm that jointly optimizes the $K$ server models through efficient gradient-based updates. Unlike clustering-based approaches that require manual client partitioning or interpolation-based methods that demand careful hyperparameter tuning, FedFew automatically discovers the optimal model diversity through its optimization process. Experiments across vision, NLP, and real-world medical imaging datasets demonstrate that FedFew, with just 3 models, consistently outperforms other state-of-the-art approaches. Code is available at \url{https://github.com/pgg3/FedFew}.

\end{abstract}

\section{Introduction} \label{sec:intro}

Personalized Federated Learning (PFL)~\cite{smith2017federated} aims to train client-specific models that best fit each client's local data distribution by leveraging aggregated knowledge from collaborative learning across the federation. This paradigm overcomes a fundamental limitation of traditional Federated Learning~(FL)~\cite{mcmahan2017communication}, where a single global model struggles to effectively serve all clients when their data are drawn from vastly different distributions $P_i$ under the non-IID setting. The benefit of personalized collaborative learning has made PFL particularly valuable in domains such as healthcare~\cite{lu2022personalized} and finance~\cite{chatterjee2023federated}, where heterogeneous data distributions necessitate client-specific models while privacy constraints require decentralized training.

The heterogeneity of client data changes the nature of the optimization landscape in PFL~\cite{li2020federated,wang2020tackling}.
When clients have distinct data distributions $P_i \neq P_j$, a model update that benefits one client may harm another due to their inherent conflicting objectives~\cite{zhao2018federated}. For instance, in a healthcare federation where hospitals in urban and rural areas have vastly different patient demographics, optimizing model accuracy for urban hospitals might learn feature representations that poorly capture rural patient characteristics~\cite{liu2021feddg,guo2021multi}.

This inherent conflict naturally leads to a multi-objective optimization perspective~\cite{smith2017federated,hu2022federated}.
Rather than seeking a single consensus model, PFL must navigate trade-offs among $M$ distinct client loss functions $\{L_1, L_2, \ldots, L_M\}$, where each client $i$ requires a model tailored to its local data distribution $P_i$. While achieving optimal personalization would ideally require learning $M$ distinct models, maintaining and training such a large number of separate models introduces prohibitive scalability challenges in federated settings involving hundreds or thousands of clients.

\begin{figure*}[t]
    \centering
    \includegraphics[width=0.98\linewidth]{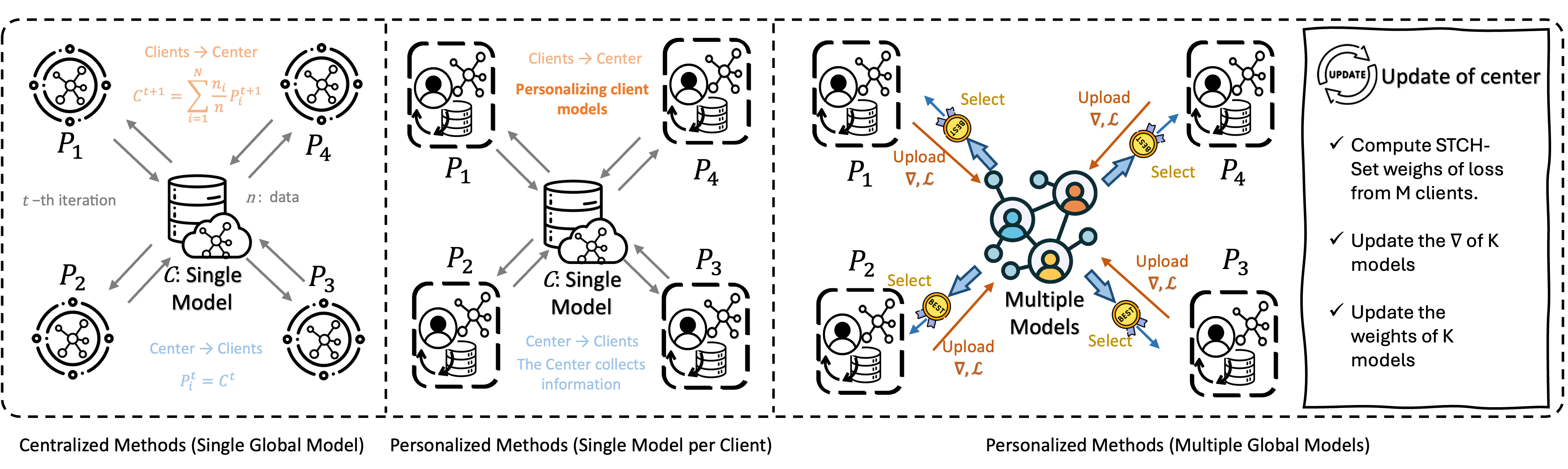}
    \vspace{-0.75\baselineskip}
    \caption{\textbf{Paradigms of Personalized Federated Learning.} \textbf{Left:} Centralized methods maintain a single global model for all $M$ clients, failing to capture client heterogeneity. \textbf{Center:} Per-client methods train $M$ independent models, sacrificing collaborative learning benefits and suffering from data scarcity. \textbf{Right:} Our proposed few-for-many approach maintains $K$ server models ($K \ll M$) that collectively serve all clients. Each client selects the best-fitting model, achieving strong personalization while preserving collaboration. }
    \label{fig:paradigms}
    \vspace{-1.5\baselineskip}
\end{figure*}

Given this scalability challenge, existing PFL methods struggle to effectively balance personalization and computational efficiency.
Methods that explicitly adopt the multi-objective perspective, such as FedMGDA~\cite{hu2022federated} and FedMTL~\cite{smith2017federated}, only obtain a single model on the \emph{Pareto front}~\cite{miettinen1999nonlinear}, which is the set of all optimal trade-off solutions. Consequently, they cannot provide individual optima for each client. Meanwhile, most PFL methods resort to heuristics without theoretical Pareto optimality guarantees: clustering-based methods like IFCA~\cite{ghosh2020efficient} and CFL~\cite{sattler2020clustered} train one model per group; interpolation-based methods like APFL~\cite{deng2020adaptive} and Ditto~\cite{tian2021ditto} mix global and local models using ad-hoc weights; and per-client methods like FedRep~\cite{collins2021exploiting} train $M$ independent models, thereby sacrificing collaboration benefits.
In summary, existing approaches face a significant limitation:
\emph{multi-objective methods produce only a single Pareto-optimal model without personalization, while heuristic methods generate $M$ personalized models without Pareto optimality guarantee.}

To address this challenge, we reformulate PFL as a few-for-many optimization problem that maintains only $K$ server models that collectively serve all $M$ clients (where $K \ll M$) as illustrated in Figure~\ref{fig:paradigms}, where each client selects the model that best fits its local data distribution. We rigorously prove that the $K$-for-$M$ framework achieves near-optimal personalization through a precise error decomposition. Our analysis establishes two vanishing error components: the \emph{Pareto coverage gap} from using $K < M$ models diminishes as $K$ increases, and the \emph{statistical error} between empirical and population losses vanishes as client dataset sizes grow. These dual convergence guarantees distinguish our approach from existing heuristic PFL methods.

Building upon this theoretical foundation, we propose \textbf{FedFew} (Federated Learning with Few Models), a novel algorithm that enables efficient gradient-based optimization in federated settings. The core challenge lies in jointly optimizing the $K$ server models alongside discrete client-model assignments, which is incompatible with standard gradient-based methods. We address this through a two-level smoothing technique that transforms the discrete selection problem into a fully differentiable objective. This formulation enables clients to perform soft model selection via gradient descent, while the server jointly updates all $K$ models to collaboratively cover all client needs. Unlike clustering-based approaches that require manual client partitioning or interpolation-based methods that demand careful hyperparameter tuning, FedFew can automatically discover the optimal model diversity through its optimization process. The algorithm seamlessly integrates with standard federated learning protocols, incurring only minimal communication overhead compared to single-model training.

Our contributions are three-fold:
\begin{itemize}[leftmargin=*,topsep=2pt,itemsep=2pt]
    \item We introduce the few-for-many optimization framework that reformulates PFL as maintaining $K$ shared models ($K \ll M$) to serve $M$ clients, addressing the scalability challenge with rigorous convergence guarantees through Pareto coverage gap and statistical error decomposition.

    \item We develop FedFew, a practical federated algorithm that solves the few-for-many problem via two-level smoothing, enabling automatic model diversity discovery through gradient-based optimization without manual client clustering or delicate hyperparameter tuning.

    \item We demonstrate through extensive experiments on seven datasets, including real-world medical imaging application, that FedFew consistently outperforms existing methods while utilizing only 3 models.
\end{itemize}

\section{Related Work}\label{sec:related}
\subsection{Standard and Personalized Federated Learning}

\bheading{Standard Federated Learning.}
Traditional approaches in federated learning aim to learn a single global model by aggregating local updates from distributed clients.
Starting with FedAvg~\cite{mcmahan2017communication}, which introduced weighted averaging, subsequent methods like FedProx~\cite{li2020federated}, SCAFFOLD~\cite{karimireddy2020scaffold}, and FedDyn~\cite{acar2021federated} have brought improvements in training convergence and stability.

While FL methods perform well under the IID data assumption, real-world FL problems often exhibit significant data heterogeneity across clients.
This mismatch can lead to degraded performance and slow convergence~\cite{zhao2018federated,li2021survey}.
Therefore, personalized federated learning approaches have been proposed to address this challenge by tailoring models to individual client distributions while still leveraging collaborative learning benefits.

\bheading{Personalized Federated Learning.}
Existing PFL methods can be roughly grouped into three categories: centralized methods with a single global model, personalized methods with one model per client, and personalized methods with multiple server models.

\lheading{Centralized Methods (Single Global Model).}
While the standard FL methods mentioned above (FedAvg~\cite{mcmahan2017communication}, FedProx~\cite{li2020federated}, etc.) maintain a single global model, they serve as important baselines for evaluating personalized approaches. Their limitation in handling heterogeneous data motivates the development of personalized methods.

\lheading{Personalized Methods (Single Model per Client).}
These methods maintain a unique personalized model for each client while leveraging knowledge from other clients.
Representation-based approaches like FedRep~\cite{collins2021exploiting} and FedBABU~\cite{oh2022fedbabu} decouple the model into shared and personalized components. Bi-level optimization methods such as Ditto~\cite{tian2021ditto}, pFedMe~\cite{dinh2020personalized}, and Per-FedAvg~\cite{fallah2020personalized} formulate personalization as a nested optimization problem.
Model interpolation approaches blend global and local models to achieve personalization. For example, FedBN~\cite{li2021fedbn} personalizes only batch normalization layers, while APFL~\cite{deng2020adaptive} learns explicit weights to mix global and local models. More recent methods like FedFomo~\cite{zhang2021personalized} and FedAMP~\cite{huang2021personalized} investigate adaptive mixing strategies.

\lheading{Personalized Methods (Multiple Server Models).}
These methods learn a small set of specialized models on the server by grouping clients with similar data distributions. IFCA~\cite{ghosh2020efficient} and CFL~\cite{sattler2020clustered} employ explicit hard clustering algorithms to assign each client to one model cluster. More recent approaches like FedSoft~\cite{ruan2022fedsoft}, FeSEM~\cite{long2023multi}, and PACFL~\cite{vahidian2023efficient} leverage soft clustering or expectation-maximization techniques for more flexible client-model associations. However, these methods lack theoretical guarantees on the quality of the learned model set and rely on heuristic clustering objectives.

\subsection{Multi-Objective Optimization in FL}

\noindent\textbf{Multi-Objective Optimization.}
Classical multi-objective optimization (MOO) approaches, such as weighted sum, Tchebycheff scalarization, and Normal Boundary Intersection~\cite{zhang2007moea}, aim to identify a set of Pareto-optimal solutions with various trade-offs among objectives. Recent gradient-based methods like MGDA~\cite{desideri2012multiple}, PCGrad~\cite{yu2020gradient}, and ParetoMTL~\cite{lin2019pareto} address multi-objective optimization by balancing conflicting gradients across objectives during the optimization process. Most recently, set scalarization methods~\cite{lin2025few} have emerged, which propose to approximate the Pareto front with a small solution set.

% \xl{I am not sure why set scalarization can "approximate the complete Pareto front with a small solution set under $\varepsilon$-Pareto optimality guarantees."}
% \TODO{Revise the last sentence (after revising all the following sections)}

\noindent\textbf{MOO in Federated Learning.}
Several PFL approaches have recognized the multi-objective nature of federated learning and attempted to address it through multi-task learning frameworks. FedMTL~\cite{smith2017federated} leverages task relationship matrices to enable knowledge transfer between clients. More recently, FedMGDA~\cite{hu2022federated} adopts Multiple Gradient Descent Algorithm (MGDA) to balance conflicting client objectives by finding a common gradient direction that benefits all clients.
While theoretically principled, these approaches involve complex bi-level optimization procedures and incur significant communication overhead, making them computationally prohibitive for large-scale federated settings.
Moreover, these methods target a single trade-off solution, limiting their ability to handle heterogeneous client preferences.

% \TODO{Clearly introduce the concept like Pareto optimality and $\varepsilon$-Pareto guarantees (in the introduction?) in advance.}

\section{PFL as Multi-Objective Optimization}
\label{sec:pfl_mop}

\subsection{Problem Setup and Client Objectives}

Consider a federated learning system with $M$ clients, where each client $i$ possesses a local dataset $D_i$ drawn from a distinct distribution $P_i$. The goal of each client is to find a model that minimizes its expected loss:
\begin{equation}
    \begin{aligned}
                           & \theta_i^* = \arg\min_{\theta} L_i(\theta),                     \\
        \text{where} \quad & L_i(\theta) = \mathbb{E}_{(x,y) \sim P_i}[\ell(f(x;\theta), y)]
    \end{aligned}
\end{equation}
where $\ell: \mathbb{R}^d \times \mathcal{Y} \to \mathbb{R}$ is the loss function and $f(\cdot;\theta)$ denotes the model parameterized by $\theta$.

In practice, clients work with empirical risk minimization over their finite local datasets:
\begin{equation}
    \hat{L}_i(\theta) = \frac{1}{|D_i|} \sum_{(x,y) \in D_i} \ell(f(x;\theta), y).
\end{equation}

This setting reveals two key challenges of the PFL problem: \emph{Collaboration} and \emph{Multi-Objective Trade-offs}.

\bheading{Collaboration.}
Independent local training often leads to severe overfitting due to limited data availability at each client, with generalization error scaling as $\mathcal{O}(1/\sqrt{|\mathcal{D}_i|})$~\cite{shalev2014understanding}. This limitation motivates collaborative learning that leverages data from other clients.
% \xl{Might add an equation for FL that leverage data from all clients. (also see comments latter in this subsection).}

\bheading{Multi-Objective Trade-Offs.}
However, collaboration introduces a challenge: when client data distributions are heterogeneous (i.e., $P_i \neq P_j, \forall i \neq j$), optimizing for one client may degrade performance on others. This inherent conflict reveals that PFL is intrinsically an $M$-objective optimization problem:
\begin{equation}
    \label{eq:multi_objective}
    \min_{\theta} \mathbf{L}(\theta) = [L_1(\theta), L_2(\theta), \ldots, L_M(\theta)]^T
\end{equation}
where no single model $\theta$ can simultaneously minimize all objectives.
To characterize optimal solutions, we introduce the concept of Pareto optimality:
\begin{definition}[Pareto Optimality~\cite{miettinen1999nonlinear}]
    A model $\theta^*$ is \emph{Pareto optimal} if there exists no other model $\theta$ such that $L_i(\theta) \leq L_i(\theta^*)$ for all $i \in [M]$ with strict inequality for at least one client. The \emph{Pareto set} contains all Pareto optimal models, and the \emph{Pareto front} is the set of objective vectors $\{[L_1(\theta^*), \ldots, L_M(\theta^*)]^T : \theta^* \text{ is Pareto optimal}\}$.
\end{definition}

While the Pareto front contains all optimal trade-off models, directly approximating it with high precision becomes computationally prohibitive.
Specifically, achieving $\varepsilon$-accuracy, where every Pareto optimal point has its representative within $\varepsilon$ distance, requires $\mathcal{O}((1/\varepsilon)^{M-1})$ models~\cite{das1998normal}.
This exponential dependence on $M$ renders direct approximation infeasible: even with $M=10$ clients, achieving $\varepsilon=0.01$ requires $10^{18}$ models, which is far beyond any practical system's capacity.
% \xl{A concise definition of $\varepsilon$-accuracy is needed.}
% \xl{Should we add an illustration figure for multi-objective optimization and Pareto front? In addition, we need to find a way to emphasize that the number of required models grows exponentially.}

\bheading{A Key Insight.}
While the Pareto front is continuous and requires exponential models to fully approximate, achieving optimal personalization does not require approximating the entire front.
Instead, we only need to find $M$ distinct models on the Pareto front, with one tailored for each client's distribution.
However, maintaining $M$ separate models remains impractical: the communication and computation costs grow linearly with $M$, becoming prohibitive when serving hundreds or thousands of clients.

This motivates our practical $K$-for-$M$ framework: we maintain only $K$ models (where $K \ll M$) that collectively serve all clients. Each client selects the best-fitting model from this set, achieving effective personalization with tractable overhead.

% \xl{The logic flow of subsection 3.1 is smooth and clear. However, it seems that the idea/setting of FL/PFL has not been properly emphasized. For example, all equations in this subsection are not explicitly for FL/PFL. }

% \TODO{Revision is needed to highlight the idea key: **FL/PFL** as multi-objective optimization. Maybe adding one key equation of FL/PFL with a breif discussion/analysis could be good enough.}

% \xl{I know it could be difficult, but we might need a good argument to support the $K \ll M$ setting (rather than $K = M$).}

\subsection{Set-based Optimization: K-for-M Framework}\label{subsec:set_k_4_m}

% Building on this insight, we formalize a set-based optimization approach where the server maintains exactly $K$ models to serve all $M$ clients.

\noindent\textbf{K-for-M Reformulation.}
Let $\Theta = \{\theta_1, \ldots, \theta_K\}$ denote the set of $K$ models maintained by the server. Each client $i$ will be served by the model $\theta_{k_i}$ that minimizes its local loss $L_i(\theta_{k_i})$. This transforms the original multi-objective problem~(\ref{eq:multi_objective}) into\footnote{Several existing PFL methods (\emph{e.g.,} IFCA~\cite{ghosh2020efficient}) implicitly tackle this same K-for-M optimization problem, though with different solution approaches.}:
\begin{equation}
    \label{eq:k4m_formulation}
    \min_{\Theta} \mathbf{F}(\Theta) = \begin{bmatrix}
        \displaystyle\min_{k_1 \in \{1,\ldots,K\}} L_1(\theta_{k_1}) \\[0.5em]
        \vdots                                                       \\[0.5em]
        \displaystyle\min_{k_M \in \{1,\ldots,K\}} L_M(\theta_{k_M})
    \end{bmatrix}.
\end{equation}

% This reformulation maintains computational tractability while preserving personalization benefits. 
The framework provides a natural mechanism for quality control: by adjusting $K$, system designers can systematically trade off between personalization quality and computational cost.

\noindent\textbf{Impact of $K$.}
The choice of $K$ determines the trade-off between personalization capacity and system efficiency:
\begin{itemize}[leftmargin=*,topsep=2pt,itemsep=2pt]
    \item $K=1$: Degenerates to a single model, where global model training (\emph{e.g.,} FedAvg) can find one Pareto optimal solution but fails to provide personalization;
    \item $K=M$: Each client could potentially have its own personalized model;
    \item $1 < K < M$: Our operating regime, balancing personalization quality with communication efficiency.
\end{itemize}

\bheading{Convergence Analysis.}
% We now give the sample complexity for the empirical K-for-M solution to converge to the population optimum of the multi-objective problem~\eqref{eq:multi_objective}. 
The following theorem characterizes the convergence rate in terms of two error components: the Pareto coverage gap and the statistical error.

\begin{theorem}[Convergence of K-for-M Framework]
    \label{thm:convergence}
    Let $\Theta^{(K)} = \{\theta_1, \ldots, \theta_K\}$ be the optimal solution with $K$ models for $M$ clients. Define $\Delta_{het} = \max_{i,j \in [M]} [L_i(\theta_j^*) - L_i(\theta_i^*)]$ as the maximum pairwise heterogeneity. Then the average error across clients is bounded by:
    \begin{align}
        \frac{1}{M}\sum_{i=1}^M \left\{\mathbb{E}\left[\min_{k \in [K]} L_i(\theta_k)\right] - L_i(\theta_i^*)\right\} \notag \\
        \qquad \leq \underbrace{\frac{M-K}{M} \cdot \Delta_{het}}_{\text{Pareto coverage gap}} + \underbrace{\mathcal{O}\left(\sqrt{\frac{Kd}{n}}\right)}_{\text{statistical error}}
    \end{align}
    where $\theta_i^* = \arg\min_\theta L_i(\theta)$ is client $i$'s optimal personalized model, $d$ is the model complexity, and $n$ is the average sample size per client. The complete proof is provided in the supplementary material.
\end{theorem}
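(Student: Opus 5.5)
The plan is to interpret $\Theta^{(K)}$ as the minimizer of the empirical $K$-for-$M$ objective $\hat G(\Theta) = \frac{1}{M}\sum_{i=1}^M \min_{k\in[K]} \hat L_i(\theta_k)$. The population counterpart is $G(\Theta) = \frac{1}{M}\sum_i \min_k L_i(\theta_k)$, and the expectation in the statement is over the draw of the datasets $D_1,\dots,D_M$. Subtracting $\frac1M\sum_i L_i(\theta_i^*)$ throughout, I would use the standard excess-risk decomposition
\begin{align*}
G(\Theta^{(K)}) - \tfrac1M\textstyle\sum_i L_i(\theta_i^*) &= \bigl[G(\Theta^{(K)}) - \hat G(\Theta^{(K)})\bigr] + \bigl[\hat G(\Theta^{(K)}) - \hat G(\Theta^\circ)\bigr] \\
&\quad + \bigl[\hat G(\Theta^\circ) - G(\Theta^\circ)\bigr] + \bigl[G(\Theta^\circ) - \tfrac1M\textstyle\sum_i L_i(\theta_i^*)\bigr].
\end{align*}
Here $\Theta^\circ$ is a fixed, data-independent comparator. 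The second bracket is $\le 0$ by optimality of $\Theta^{(K)}$. The third has zero mean because $\Theta^\circ$ is fixed. The first and last brackets give the statistical error and the Pareto coverage gap, respectively.

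For the coverage gap, I would take $\Theta^\circ = \{\theta_1^*,\dots,\theta_K^*\}$, i.e. the individual optima of $K$ of the clients. Clients $i\le K$ then satisfy $\min_k L_i(\theta_k^*) = L_i(\theta_i^*)$ and contribute zero. Each of the remaining $M-K$ clients satisfies $\min_k L_i(\theta_k^*) - L_i(\theta_i^*) \le L_i(\theta_1^*) - L_i(\theta_i^*) \le \Delta_{het}$ by definition of $\Delta_{het}$. Averaging over the $M$ clients yields exactly $\frac{M-K}{M}\Delta_{het}$. This step is elementary. The only care needed is that $\Theta^\circ$ is deterministic, so the third bracket vanishes in expectation.

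The main obstacle is the first bracket, which requires a uniform deviation bound $\mathbb{E}\sup_{\Theta}\bigl|G(\Theta)-\hat G(\Theta)\bigr|$ over sets of $K$ models. I would first use $|\min_k a_k - \min_k b_k| \le \max_k |a_k-b_k|$ to reduce to the per-client bound $\mathbb{E}\sup_{\Theta}\max_{k}|L_i(\theta_k)-\hat L_i(\theta_k)|$. Then, assuming the loss is bounded and the parameter class has complexity $d$ (VC/pseudo-dimension or a covering-number bound), I would apply a Rademacher or covering argument to the composite class $\{(x,y)\mapsto \min_k \ell(f(x;\theta_k),y)\}$. Its complexity is at most $Kd$ up to log factors, since a cover of each $\theta_k$ at scale $\epsilon$ gives a cover of the min at the same scale, with size raised to the $K$-th power. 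This yields a per-client deviation of order $\sqrt{Kd/n_i}$. Averaging over clients and using concavity of the square root (Jensen) replaces $n_i$ by a harmonic-type average, which I would identify with $n$.

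Combining the three pieces gives $\frac{M-K}{M}\Delta_{het} + \mathcal{O}(\sqrt{Kd/n})$. I expect the delicate points to be stating the boundedness and complexity assumptions precisely, and handling logarithmic factors. These I would absorb into the $\mathcal{O}(\cdot)$.
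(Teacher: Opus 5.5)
Your proof is correct, and it takes a genuinely different and cleaner route than the paper's.

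\textbf{The paper's route.}
\begin{itemize}
\item For the coverage gap, it works with the \emph{population} optimum $\Theta^{(K)}_*$. It invokes a clustering assumption under which $\Theta^{(K)}_*$ literally contains the individual optima $\theta_{r_k}^*$ of $K$ representative clients. This assumption does not appear in the theorem statement.
\item For the statistical part, it argues client by client, using uniform convergence over the product class of VC dimension $\mathcal{O}(Kd)$. Its three-term split asserts that the middle term $\min_k \hat L_i(\theta_k)-\min_k \hat L_i(\theta_k^*)$ is $\le 0$ for each $i$, and it finishes with a union bound over the $M$ clients.
\end{itemize}

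\textbf{What your route buys.}
\begin{itemize}
\item Your fixed comparator $\Theta^\circ=\{\theta_1^*,\dots,\theta_K^*\}$ makes the clustering assumption unnecessary. You only use optimality of the empirical minimizer against one feasible competitor, and the coverage computation for $\Theta^\circ$ needs nothing beyond the definition of $\Delta_{het}$. The same comparison shows the paper's coverage lemma holds for $\Theta^{(K)}_*$ with no clustering assumption at all.
\item Because you compare at the level of the averaged objective $\hat G$, your ``$\le 0$'' step is valid. The paper's per-client version is not justified, since the empirical solution minimizes only the sum over clients.
\item You get the expectation bound in the theorem directly, with no union bound. The paper states a high-probability version with a $\log(M/\delta)$ term; a bounded-differences inequality would give you that too if wanted.
\end{itemize}

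\textbf{Two small fixes.}
\begin{itemize}
\item The third bracket does not have zero mean. Since the minimum is concave, $\mathbb{E}\bigl[\min_k \hat L_i(\theta^\circ_k)\bigr]\le \min_k L_i(\theta^\circ_k)$. So its mean is $\le 0$, which is all you need.
\item The composite class $\{(x,y)\mapsto \min_k \ell(f(x;\theta_k),y)\}$ is the wrong object. $\min_k \hat L_i(\theta_k)$ is a minimum of empirical averages, not the empirical average of a pointwise minimum, so uniform convergence over that class does not control it.
\end{itemize}
You do not need that class anyway. Your reduction already gives $\sup_{\Theta}\max_k |L_i(\theta_k)-\hat L_i(\theta_k)| = \sup_{\theta}|L_i(\theta)-\hat L_i(\theta)|$. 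This is a single-class deviation of order $\sqrt{d/n_i}$, which is stronger than the stated $\sqrt{Kd/n}$; the factor $K$ is an artifact of covering the product space.

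Your harmonic-mean reading of $n$ is also the correct one to make explicit. The paper tacitly treats all $n_i$ as equal.
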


\begin{remark}[Convergence to Optimal Solution]
    The bound decomposes the approximation error into two independent dimensions: (i) when $K=M$, the Pareto coverage gap vanishes, recovering individual personalized models for each client; (ii) as the local dataset size $n \to \infty$, the statistical error vanishes, ensuring the empirical solution converges to the population optimum. Achieving zero error requires both conditions simultaneously.
\end{remark}

% \xl{The term "two paths" might be not accurate, since we need **both** (i) and (ii) to achieve a 0 bound (global optimal solution?).}

\section{FedFew Algorithm}
\label{sec:algorithm}

\subsection{Smooth Tchebycheff Set Scalarization}

The K-for-M formulation~\eqref{eq:k4m_formulation} is an $M$-objective optimization problem, where each objective involves selecting the best model from a set $\Theta$.
To solve this problem while ensuring Pareto optimality, we adopt the Tchebycheff set scalarization (TCH-Set) approach, which transforms the multi-objective problem into a single scalar objective~\cite{zhang2007moea,lin2025few}:
\begin{equation}
    g^{\text{TCH-Set}}(\Theta|\lambda) = \max_{1 \leq i \leq M} \left\{\lambda_i\left(\min_{1 \leq k \leq K} L_i(\theta_k) - z_i^*\right)\right\}
    \label{eq:tch_set}
\end{equation}
where $\lambda = (\lambda_1, \ldots, \lambda_M)$ are client preference weights and $z_i^*$ is the ideal loss value for client $i$.

% \TODO{A brief discussion (motivation and benefit) of the Tchebycheff set scalarization, especially under the FL/PFL setting.}

This scalarization is particularly suited for personalized federated settings because: (i) it guarantees Pareto optimality of the solutions and (ii) it naturally handles heterogeneous client objectives without requiring explicit aggregation.
However, the nested $\max$ and $\min$ operators make~\eqref{eq:tch_set} non-differentiable, preventing gradient-based optimization required for federated training.

\bheading{Two-Level Smoothing.}
Since both $\max$ and $\min$ operators are non-differentiable, we employ log-sum-exp smoothing to enable gradient-based optimization~\cite{lin2025few,guo2025mos}:
\begin{align}
    \max_i \{x_i\} & \approx \mu \log\left(\sum_i \exp(x_i/\mu)\right) \label{eq:smooth_max}   \\
    \min_i \{x_i\} & \approx -\mu \log\left(\sum_i \exp(-x_i/\mu)\right) \label{eq:smooth_min}
\end{align}
where $\mu > 0$ controls the approximation quality.

\bheading{Final Formulation.}
For simplicity, we set $\lambda_i = 1$ and $z_i^* = 0$. Applying the two-level smoothing~\eqref{eq:smooth_max} and~\eqref{eq:smooth_min} to~\eqref{eq:tch_set}, we obtain the smooth Tchebycheff set scalarization (STCH-Set):
\begin{equation}
    g^{\text{STCH-Set}}(\Theta) = \mu \log \sum_{i=1}^M \left(\sum_{k=1}^K \exp\left(-\frac{L_i(\theta_k)}{\mu}\right)\right)^{-1}
    \label{eq:stch_set}
\end{equation}
where $\mu > 0$ is the smoothing parameter.\footnote{In implementation, we weight each client's loss $L_i(\theta_k)$ by its normalized sample size to account for varying local dataset sizes like FedAvg~\cite{mcmahan2017communication} before aggregation, i.e., $L_i(\theta_k) \gets \frac{n_i}{\sum_{j=1}^M n_j} \cdot L_i(\theta_k)$.}

\subsection{Decomposed Gradient Computation}

Taking the gradient of $g^{\text{STCH-Set}}$ in \eqref{eq:stch_set} with respect to $\theta_k$:
\begin{equation}
    \nabla_{\theta_k} g^{\text{STCH-Set}} = \sum_{i=1}^M \alpha_i \cdot w_{ik} \cdot \nabla_{\theta_k} L_i(\theta_k)
    \label{eq:gradient}
\end{equation}
where the weights decompose into two components. Define $S_i = \sum_{k=1}^K \exp(-L_i(\theta_k)/\mu)$. Then:
\begin{align}
     & \text{(Outer weight)} \quad \alpha_i = \frac{S_i^{-1}}{\sum_{j=1}^M S_j^{-1}}, \label{eq:alpha} \\
     & \text{(Inner weight)} \quad w_{ik}    = \frac{\exp(-L_i(\theta_k)/\mu)}{S_i}.  \label{eq:w}
\end{align}
The outer weight $\alpha_i$ assigns higher importance to clients with larger $S_i^{-1}$ (i.e., clients that perform poorly across all models), thereby implementing a hard-sample mining effect. The inner weight $w_{ik}$ performs soft model selection by assigning higher weights to models with lower loss for each client $i$, thus smoothly identifying the best-matching model for that client.

\subsection{Federated Implementation}

FedFew alternates between client gradient computation and server model updates through smooth Tchebycheff set scalarization. The federated optimization proceeds in communication rounds as outlined in Algorithm~\ref{alg:few_for_many}.

\bheading{Client Side:} Each client $i$ computes local gradients $g_{ik} = \nabla_{\theta_k} L_i(\theta_k)$ for all $K$ models and sends them to the server.

\bheading{Server Side:} The server computes weights $\{\alpha_i, w_{ik}\}$ from current losses $\{L_i(\theta_k)\}$ and aggregates:
\begin{equation}
    \nabla_{\theta_k} g^{\text{STCH-Set}} = \sum_{i=1}^M \alpha_i \cdot w_{ik} \cdot g_{ik}.
    \label{eq:server_agg}
\end{equation}

\begin{algorithm}[t]
    \caption{FedFew: Few-for-Many PFL with Smooth Tchebycheff Set Scalarization}
    \label{alg:few_for_many}
    \begin{algorithmic}[1]
        \INPUT $M$ clients with datasets $\{\mathcal{D}_i\}_{i=1}^M$, $K$ initial models $\Theta^{(0)}$, smoothing parameter $\mu$, learning rate $\eta$, local epochs $E$, communication rounds $T$
        \OUTPUT Optimized model set $\Theta^{(T)}$

        \FOR{$t = 1, 2, \ldots, T$}
        \STATE Server broadcasts $\Theta^{(t-1)}$ to all clients

        \FOR{each client $i = 1, 2, \ldots, M$ \textbf{in parallel}}
        \FOR{each model $k = 1, 2, \ldots, K$}
        \FOR{$e = 1, 2, \ldots, E$}
        \STATE Update local model: $\theta_k \leftarrow \theta_k - \eta \nabla_{\theta_k} L_i(\theta_k)$
        \ENDFOR
        \STATE Compute gradient $g_{ik}^{(t)} = \nabla_{\theta_k} L_i(\theta_k)$ and loss $L_i^{(t)}(\theta_k)$
        \ENDFOR
        \STATE Send $\{(g_{ik}^{(t)}, L_i^{(t)}(\theta_k))\}_{k=1}^K$ to server
        \ENDFOR

        \FOR{each model $k = 1, 2, \ldots, K$}
        \STATE Compute STCH-Set weights $\{\alpha_i, w_{ik}\}$ from losses $\{L_i^{(t)}(\theta_k)\}$ using Eqs.~\eqref{eq:alpha} and~\eqref{eq:w}
        \STATE $\nabla_{\theta_k} g^{\text{STCH-Set}} = \sum_{i=1}^M \alpha_i \cdot w_{ik} \cdot g_{ik}^{(t)}$
        \STATE $\theta_k^{(t)} = \theta_k^{(t-1)} - \eta \cdot \nabla_{\theta_k} g^{\text{STCH-Set}}$
        \ENDFOR
        \ENDFOR

        \STATE \textbf{Model Selection (post-training):}
        \STATE Server broadcasts trained models $\Theta^{(T)}$ to all clients
        \FOR{each client $i = 1, 2, \ldots, M$ \textbf{in parallel}}
        \STATE Client $i$ evaluates all $K$ models on local validation/training data
        \STATE Compute losses: $\{L_i(\theta_1^{(T)}), L_i(\theta_2^{(T)}), \ldots, L_i(\theta_K^{(T)})\}$
        \STATE Select best model: $k_i^* = \arg\min_{k \in \{1,\ldots,K\}} L_i(\theta_k^{(T)})$
        \ENDFOR
    \end{algorithmic}
\end{algorithm}

\bheading{Model Selection Mechanism.} After training, each client identifies the most suitable model from the $K$ available candidates through a simple local evaluation procedure. This process involves performing forward passes with all $K$ models on the client's local validation or training set, computing the corresponding losses, and selecting the model that achieves the minimum loss.
% \xl{Should we remove training set and only say validation set? 实验中确实用的是training set.}

\bheading{Communication Efficiency.}
% \xl{Should we remove the first sentence to avoid any potential concern?} 
In each communication round, every client performs $E$ local epochs of training and sends $K$ gradients along with $K$ scalar loss values to the server. The per-client communication cost is $\mathcal{O}(Kd)$ where $d$ is the model dimension. Since $K$ is typically small (ranging from 3 to 10 in practice) and remains fixed regardless of the total number of clients $M$, the resulting communication overhead factor of $K$ is modest. More importantly, by using local epochs $E > 1$, the number of required communication rounds $T$ can be reduced proportionally (See \S~\ref{subsec:communication}).

% , thereby achieving the same overall communication cost as FedAvg for comparable convergence. This characteristic makes our approach practical for large-scale federated learning deployments.

\subsection{Convergence Guarantees}\label{subsec:alg_convergence}

% The smooth approximation~\eqref{eq:stch_set} enables effective gradient-based optimization while preserving theoretical convergence guarantees. 
We establish two key theoretical properties:  uniform approximation quality and Pareto optimality guarantees.

\begin{theorem}[Uniform Smooth Approximation~\cite{lin2025few}]
    \label{thm:smooth_error}
    The smooth Tchebycheff set scalarization $g^{\text{STCH-Set}}(\Theta)$ uniformly approximates the non-smooth version $g^{\text{TCH-Set}}(\Theta)$. As the smoothing parameter $\mu \to 0$:
    \begin{equation}
        \lim_{\mu \to 0} g^{\text{STCH-Set}}(\Theta) = g^{\text{TCH-Set}}(\Theta)
        \label{eq:smooth_approx}
    \end{equation}
    uniformly over all model sets $\Theta$, with approximation error bounded by $\mathcal{O}(\mu \log M + \mu \log K)$.
\end{theorem}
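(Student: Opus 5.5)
The plan is to bound each smoothing level separately with the elementary log-sum-exp sandwich, then chain the two bounds. For any reals $x_1,\dots,x_N$ and $\mu>0$, writing $x_{\max}=\max_j x_j$, we have
\begin{equation*}
x_{\max} \le \mu\log\sum_{j=1}^N e^{x_j/\mu} \le x_{\max} + \mu\log N .
\end{equation*}
The lower bound keeps only the largest summand. The upper bound replaces every summand by $e^{x_{\max}/\mu}$. Applying this to $-x_j$ gives the analogous bound for the smoothed min:
\begin{equation*}
\min_j x_j - \mu\log N \le -\mu\log\sum_{j=1}^N e^{-x_j/\mu} \le \min_j x_j .
\end{equation*}
Both bounds hold for every input, with no assumption on the sizes of the $x_j$. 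This independence from the data is what will make the final estimate uniform in $\Theta$.

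Next, rewrite \eqref{eq:stch_set} as a nested composition. With $\lambda_i=1$ and $z_i^*=0$, define the inner smoothed value
\[
\tilde m_i(\Theta) = -\mu\log\sum_{k=1}^K e^{-L_i(\theta_k)/\mu}.
\]
Since $\left(\sum_k e^{-L_i(\theta_k)/\mu}\right)^{-1}=e^{\tilde m_i/\mu}$, we get
\[
g^{\text{STCH-Set}}(\Theta)=\mu\log\sum_{i=1}^M e^{\tilde m_i(\Theta)/\mu}.
\]
Let $m_i(\Theta)=\min_k L_i(\theta_k)$, so that $g^{\text{TCH-Set}}(\Theta)=\max_i m_i(\Theta)$.

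The inner bound with $N=K$ gives $m_i-\mu\log K\le\tilde m_i\le m_i$ for every $i$. Because the smoothed max is monotone in each argument and commutes with adding a constant, we obtain
\[
\mu\log\sum_i e^{m_i/\mu}-\mu\log K \;\le\; g^{\text{STCH-Set}} \;\le\; \mu\log\sum_i e^{m_i/\mu}.
\]
The outer bound with $N=M$ then turns this into
\[
g^{\text{TCH-Set}}-\mu\log K \;\le\; g^{\text{STCH-Set}} \;\le\; g^{\text{TCH-Set}}+\mu\log M .
\]
Hence
\[
\left|g^{\text{STCH-Set}}(\Theta)-g^{\text{TCH-Set}}(\Theta)\right|\le \mu\max(\log M,\log K)\le \mu(\log M+\log K)
\]
for all $\Theta$. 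Letting $\mu\to0$ gives the uniform limit \eqref{eq:smooth_approx} with the stated $\mathcal{O}(\mu\log M+\mu\log K)$ rate.

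The only points needing care are the monotonicity and translation properties of the log-sum-exp map used to pass the inner error through the outer smoothing, and tracking the signs of the two errors, which point in opposite directions. I do not expect a real obstacle. The argument as written requires the losses to be finite for each fixed $\Theta$. The sample-size weighting in the implementation footnote only rescales the $L_i$, so the same bound holds for the weighted losses.
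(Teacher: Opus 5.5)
Your proof is correct and follows the same route as the paper. Both sandwich the inner soft-min and the outer soft-max with the elementary log-sum-exp bounds, then push the inner error through the monotone, translation-equivariant outer map. Your signs are in fact the right ones, whereas the paper's write-up asserts $\tilde m_i - \mu\log K \le \min_k L_i(\theta_k) \le \tilde m_i$, which contradicts its own soft-min bound, and therefore concludes $g^{\text{TCH-Set}} \le g^{\text{STCH-Set}}$. That inequality already fails for $M=1$, $K=2$ with equal losses, where $g^{\text{STCH-Set}} = g^{\text{TCH-Set}} - \mu\log 2$. Your two-sided estimate $g^{\text{TCH-Set}} - \mu\log K \le g^{\text{STCH-Set}} \le g^{\text{TCH-Set}} + \mu\log M$ is the correct statement, and it still gives the claimed uniform $\mathcal{O}(\mu\log M + \mu\log K)$ rate.
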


The smoothing parameter $\mu$ controls the degree of smoothness in the approximation: smaller $\mu$ yields a tighter approximation to the original min-max objective, but results in sharper gradients that may hinder optimization.

\begin{theorem}[Pareto Properties of STCH-Set~\cite{lin2025few}]
    \label{thm:pareto_properties}
    The smooth Tchebycheff set scalarization provides strong Pareto guarantees:
    \begin{enumerate}
        \item \textbf{Pareto Optimality:} All solutions in the optimal set $\Theta^*_K$ are weakly Pareto optimal. Moreover, they are Pareto optimal if either the optimal set is unique or all preference coefficients are positive.
        \item \textbf{Pareto Stationarity:} If gradient descent converges to $\hat{\Theta} = \{\hat{\theta}_1, \ldots, \hat{\theta}_K\}$ where $\nabla_{\hat{\theta}_k} g^{\text{STCH-Set}}(\hat{\Theta}) = 0$ for all $k$, then all solutions in $\hat{\Theta}$ are Pareto stationary for the original multi-objective problem~\eqref{eq:multi_objective}.
    \end{enumerate}
\end{theorem}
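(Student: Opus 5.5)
The plan follows the optimization-theoretic argument of \cite{lin2025few}, specialized to $\lambda_i=1$, $z_i^*=0$. Write $G_i(\Theta)=\min_k L_i(\theta_k)$ and $g^{\text{TCH-Set}}(\Theta)=\max_i G_i(\Theta)$. Both parts rest on two facts about the smoothed objective: it is a strictly increasing function of each smoothed per-client quantity $\tilde G_i(\Theta)=-\mu\log\sum_k \exp(-L_i(\theta_k)/\mu)$, and each $\tilde G_i$ is in turn strictly increasing in every $L_i(\theta_k)$.

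\textbf{Part 1 (Pareto optimality).} I would argue by contradiction. Suppose some $\theta_k^*\in\Theta_K^*$ is not weakly Pareto optimal, so some $\theta'$ satisfies $L_i(\theta')<L_i(\theta_k^*)$ for all $i$.

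\begin{itemize}
\item Replace $\theta_k^*$ by $\theta'$ to form $\Theta'$. Every term $\exp(-L_i(\theta_k)/\mu)$ strictly increases, so each inner sum $S_i$ strictly increases and each $S_i^{-1}$ strictly decreases.
\item Since $\log$ is strictly increasing, $g^{\text{STCH-Set}}(\Theta')<g^{\text{STCH-Set}}(\Theta_K^*)$, which contradicts optimality. Weak Pareto optimality therefore holds.
\item For full Pareto optimality, suppose $\theta'$ dominates only weakly: $\le$ in every coordinate and $<$ in at least one. The same computation gives $g(\Theta')\le g(\Theta_K^*)$, with strict inequality as soon as one $L_i$ strictly drops. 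This is because every client's term enters the sum with positive weight, which is exactly the positive-preference condition (here $\lambda_i=1>0$).
\item Under the alternative hypothesis, $\Theta'$ is also optimal, so uniqueness forces $\Theta'=\Theta_K^*$. That contradicts $\theta'\neq\theta_k^*$, which holds because the dominance is strict in some coordinate.
\end{itemize}

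\textbf{Part 2 (Pareto stationarity).} I would use the decomposition in Eq.~\eqref{eq:gradient}. At a stationary point,
\[
0=\nabla_{\hat\theta_k} g^{\text{STCH-Set}}=\sum_{i=1}^M \alpha_i w_{ik}\nabla L_i(\hat\theta_k).
\]
From Eqs.~\eqref{eq:alpha}--\eqref{eq:w}, every product $\alpha_i w_{ik}$ is strictly positive, being built from exponentials. Set $\beta_i^{(k)}=\alpha_i w_{ik}/\sum_j \alpha_j w_{jk}$. Then $\beta^{(k)}$ lies in the simplex and $\sum_i \beta_i^{(k)}\nabla L_i(\hat\theta_k)=0$. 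This is exactly the definition of Pareto stationarity (the MGDA/Fritz John condition) for the problem in Eq.~\eqref{eq:multi_objective}, and it holds for each $k$ separately.

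\textbf{Main obstacle.} Part 2 is essentially immediate. The delicate step is the Pareto optimality claim: the statement says ``optimal set'' without specifying whether it is optimal for TCH-Set or for STCH-Set. For the nonsmooth max-min, replacing one model may change neither the max nor the min, so strict improvement can fail. That is why only weak optimality holds there in general, and why uniqueness is required. I would state the argument for the smooth objective, where strict monotonicity makes it clean, and note the uniqueness route for the nonsmooth case.
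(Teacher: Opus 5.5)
Your proposal is correct and follows the paper's route: a monotonicity contradiction for Part~1, and the decomposed gradient $\sum_i \alpha_i w_{ik}\nabla L_i(\hat\theta_k)=0$ for Part~2. It is in fact tighter than the written proof in two places: replacing only the dominated $\theta_k^*$ is what forces every $S_i$ to increase strictly (the paper's set-level hypothesis on the per-client minima does not imply this), and your renormalization by $\sum_j \alpha_j w_{jk}>0$ is needed because $\sum_i \alpha_i w_{ik}=1$ holds only after also summing over $k$, contrary to the paper's claim.

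One small correction to your closing remark: for the nonsmooth TCH-Set, even weak Pareto optimality can fail for some optimal sets, since a model serving only non-bottleneck clients can be strictly dominated without changing the max. There one only gets existence of an optimal set of Pareto optimal models, or Pareto optimality under uniqueness.
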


% This theorem guarantees that our K-for-M framework finds solutions on the Pareto frontier. 

Combined with standard SGD convergence analysis, gradient descent on the smooth objective drives the expected squared gradient norm to $\mathcal{O}(1/\sqrt{T})$ after $T$ iterations. The overall approximation quality is controlled by both the optimization error ($\sim 1/\sqrt{T}$) and the smoothing error ($\sim \mu \log M + \mu \log K$).
Detailed proofs are provided in the supplementary material.

\bheading{Comparison with Clustering.}
Interestingly, clustering-based methods like IFCA~\cite{ghosh2020efficient} attempt to solve the same K-for-M optimization problem~\eqref{eq:k4m_formulation}. However, its hard client-to-cluster assignment creates a non-convex, discontinuous optimization landscape that lacks convergence guarantees. In contrast, our smooth Tchebycheff formulation ensures convergence to Pareto stationary points (Theorem~\ref{thm:pareto_properties}), demonstrating that the optimization strategy is crucial for both theoretical guarantees and practical performance.

\section{Experiments}\label{sec:experiments}

\subsection{Experimental Setup}

% Auto-generated table by scripts/generate_overall_table.py
% DO NOT EDIT MANUALLY
% Requires: \usepackage{booktabs} \usepackage{multirow} \usepackage{colortbl}
% Requires: \definecolor{light-gray}{gray}{0.9}

\begin{table*}[t]
    \centering
    \caption{Overall performance comparison across CIFAR-10, CIFAR-100, TINY (TinyImageNet), AG News, and FEMNIST datasets under pathological and practical heterogeneous settings. Results are reported as mean accuracy (\%) $\pm$ standard deviation. Best results are \colorbox{light-gray}{\textbf{highlighted}} and \textbf{bolded}, second-best results are \underline{underlined}.}
    \vspace{-0.75\baselineskip}
    \label{tab:overall_results}
    \resizebox{\textwidth}{!}{%
        \begin{tabular}{l c c c c c c c c c}
            \toprule
            \textbf{Method}                        & \multicolumn{2}{c}{\textbf{Pathological heterogeneous setting}}                     & \multicolumn{7}{c}{\textbf{Practical heterogeneous setting}}                                                                                                                                                                                                                                                                                                                                                                                                                                                                                                                                                                                                                                                   \\
            \cmidrule(lr){2-3} \cmidrule(lr){4-10}
                                                   & \multicolumn{2}{c}{\textbf{CIFAR-100}}                                              & \multicolumn{1}{c}{\textbf{FEMNIST}}                                                & \multicolumn{2}{c}{\textbf{CIFAR-10}}                                                & \multicolumn{2}{c}{\textbf{CIFAR-100}}                                              & \multicolumn{1}{c}{\textbf{TINY}}                                                   & \multicolumn{1}{c}{\textbf{AG News}}                                                                                                                                                                                                                                                                                                                  \\
            \cmidrule(lr){2-3} \cmidrule(lr){4-4} \cmidrule(lr){5-6} \cmidrule(lr){7-8} \cmidrule(lr){9-9} \cmidrule(lr){10-10}
                                                   & $M=10$                                                                              & $M=20$                                                                              & $M=20$                                                                               & $M=10$                                                                              & $M=20$                                                                              & $M=10$                                                                              & $M=20$                                                                              & $M=10$                                                                              & $M=20$                                                                              \\
            \midrule
            \multicolumn{10}{l}{\textit{Centralized Methods (Single Global Model)}}                                                                                                                                                                                                                                                                                                                                                                                                                                                                                                                                                                                                                                                                                                                                                                       \\
            FedAvg~\cite{mcmahan2017communication} & $\text{29.00} \pm \text{3.94}$                                                      & $\text{28.57} \pm \text{4.37}$                                                      & $\text{96.65} \pm \text{1.81}$                                                       & $\text{61.36} \pm \text{8.54}$                                                      & $\text{61.26} \pm \text{8.64}$                                                      & $\text{30.84} \pm \text{2.26}$                                                      & $\text{31.10} \pm \text{4.21}$                                                      & $\text{13.49} \pm \text{1.55}$                                                      & $\text{88.90} \pm \text{7.15}$                                                      \\
            FedProx~\cite{li2020federated}         & $\text{28.56} \pm \text{4.50}$                                                      & $\text{28.11} \pm \text{3.86}$                                                      & $\text{96.51} \pm \text{2.32}$                                                       & $\text{60.87} \pm \text{7.87}$                                                      & $\text{61.38} \pm \text{9.35}$                                                      & $\text{30.74} \pm \text{2.25}$                                                      & $\text{30.90} \pm \text{4.03}$                                                      & $\text{13.64} \pm \text{1.56}$                                                      & $\text{83.42} \pm \text{11.34}$                                                     \\
            FedMTL~\cite{smith2017federated}       & $\text{65.33} \pm \text{3.64}$                                                      & $\text{59.65} \pm \text{3.70}$                                                      & \cellcolor{light-gray}$\textbf{\text{100.00}} \boldsymbol{\pm} \textbf{\text{0.00}}$ & $\text{85.92} \pm \text{11.18}$                                                     & $\text{85.75} \pm \text{11.62}$                                                     & $\text{46.28} \pm \text{3.82}$                                                      & $\text{44.79} \pm \text{5.25}$                                                      & $\text{23.49} \pm \text{2.43}$                                                      & $\text{94.10} \pm \text{7.56}$                                                      \\
            \midrule
            \multicolumn{10}{l}{\textit{Personalized Methods (Single Model per Client)}}                                                                                                                                                                                                                                                                                                                                                                                                                                                                                                                                                                                                                                                                                                                                                                  \\
            APFL~\cite{deng2020adaptive}           & $\text{64.69} \pm \text{4.01}$                                                      & $\text{59.97} \pm \text{3.91}$                                                      & $\text{99.93} \pm \text{0.20}$                                                       & \cellcolor{light-gray}$\textbf{\text{88.38}} \boldsymbol{\pm} \textbf{\text{7.83}}$ & $\underline{\text{87.36} \pm \text{10.92}}$                                         & $\underline{\text{48.30} \pm \text{3.44}}$                                          & $\underline{\text{46.67} \pm \text{5.11}}$                                          & $\text{24.26} \pm \text{2.71}$                                                      & $\text{94.26} \pm \text{7.41}$                                                      \\
            Ditto~\cite{tian2021ditto}             & $\text{65.32} \pm \text{3.63}$                                                      & $\text{59.61} \pm \text{3.66}$                                                      & \cellcolor{light-gray}$\textbf{\text{100.00}} \boldsymbol{\pm} \textbf{\text{0.00}}$ & $\text{85.97} \pm \text{10.95}$                                                     & $\text{85.72} \pm \text{11.77}$                                                     & $\text{46.19} \pm \text{3.50}$                                                      & $\text{44.89} \pm \text{5.16}$                                                      & $\text{23.45} \pm \text{2.70}$                                                      & $\text{94.06} \pm \text{7.69}$                                                      \\
            FedRep~\cite{collins2021exploiting}    & \cellcolor{light-gray}$\textbf{\text{66.50}} \boldsymbol{\pm} \textbf{\text{3.41}}$ & $\underline{\text{61.46} \pm \text{3.82}}$                                          & \cellcolor{light-gray}$\textbf{\text{100.00}} \boldsymbol{\pm} \textbf{\text{0.00}}$ & $\text{87.36} \pm \text{8.92}$                                                      & $\text{86.94} \pm \text{10.76}$                                                     & $\text{48.26} \pm \text{3.31}$                                                      & $\text{46.46} \pm \text{4.52}$                                                      & $\underline{\text{27.24} \pm \text{2.77}}$                                          & $\underline{\text{94.68} \pm \text{12.69}}$                                         \\
            FedAMP~\cite{huang2021personalized}    & $\text{65.41} \pm \text{3.67}$                                                      & $\text{59.66} \pm \text{3.59}$                                                      & \cellcolor{light-gray}$\textbf{\text{100.00}} \boldsymbol{\pm} \textbf{\text{0.00}}$ & $\text{85.96} \pm \text{10.92}$                                                     & $\text{85.93} \pm \text{11.53}$                                                     & $\text{46.13} \pm \text{3.62}$                                                      & $\text{45.00} \pm \text{5.08}$                                                      & $\text{23.32} \pm \text{2.62}$                                                      & $\text{94.17} \pm \text{7.48}$                                                      \\
            \midrule
            \multicolumn{10}{l}{\textit{Personalized Methods (Multiple Server Models)}}                                                                                                                                                                                                                                                                                                                                                                                                                                                                                                                                                                                                                                                                                                                                                                   \\
            IFCA~\cite{ghosh2020efficient}         & $\text{42.34} \pm \text{5.18}$                                                      & $\text{43.89} \pm \text{3.58}$                                                      & $\text{99.46} \pm \text{0.78}$                                                       & $\text{77.27} \pm \text{7.14}$                                                      & $\text{73.35} \pm \text{12.00}$                                                     & $\text{34.09} \pm \text{5.69}$                                                      & $\text{29.80} \pm \text{3.75}$                                                      & $\text{15.24} \pm \text{1.90}$                                                      & $\text{90.63} \pm \text{11.79}$                                                     \\
            \textbf{FedFew (Ours)}                 & $\underline{\text{65.47} \pm \text{3.90}}$                                          & \cellcolor{light-gray}$\textbf{\text{64.98}} \boldsymbol{\pm} \textbf{\text{3.32}}$ & \cellcolor{light-gray}$\textbf{\text{100.00}} \boldsymbol{\pm} \textbf{\text{0.00}}$ & $\underline{\text{88.17} \pm \text{7.74}}$                                          & \cellcolor{light-gray}$\textbf{\text{88.26}} \boldsymbol{\pm} \textbf{\text{9.06}}$ & \cellcolor{light-gray}$\textbf{\text{50.44}} \boldsymbol{\pm} \textbf{\text{3.14}}$ & \cellcolor{light-gray}$\textbf{\text{53.69}} \boldsymbol{\pm} \textbf{\text{4.79}}$ & \cellcolor{light-gray}$\textbf{\text{30.31}} \boldsymbol{\pm} \textbf{\text{3.06}}$ & \cellcolor{light-gray}$\textbf{\text{96.07}} \boldsymbol{\pm} \textbf{\text{4.82}}$ \\
            \bottomrule
        \end{tabular}
    }
\end{table*}

\vspace{-0.5\baselineskip}

\begin{table*}[t]
    \vspace{-1\baselineskip}
    \begin{minipage}{0.60\textwidth}
        \captionof{table}{Performance comparison on medical imaging datasets (Kvasir and FedISIC). For each dataset, we report three metrics: (1) Avg: average accuracy across all clients with standard deviation; (2) Min: worst-case client accuracy; (3) Max: best-case client accuracy. Best results are \colorbox{light-gray}{\textbf{highlighted}} and \textbf{bolded}, second-best results are \underline{underlined}.}
        \vspace{-0.5\baselineskip}
        \label{tab:medical_results}
        \resizebox{\textwidth}{!}{
            % Auto-generated table by scripts/generate_medical_table.py
% DO NOT EDIT MANUALLY
% Requires: \usepackage{booktabs} \usepackage{multirow} \usepackage{colortbl}
% Requires: \definecolor{light-gray}{gray}{0.9}
% Note: This file only contains tabular content. Wrap it with table environment in the main document.

\centering
\footnotesize
\begin{tabular}{l c c c c c c}
\toprule
\textbf{Method} & \multicolumn{3}{c}{\textbf{Kvasir}} & \multicolumn{3}{c}{\textbf{FedISIC}} \\
\cmidrule(lr){2-4} \cmidrule(lr){5-7}
 & \textit{Avg. $\pm$ Std.} & \textit{Min.} & \textit{Max.} & \textit{Avg. $\pm$ Std.} & \textit{Min.} & \textit{Max.} \\
\midrule
\multicolumn{7}{l}{\textit{Local-only Baseline}} \\
Local-only & $\text{92.16} \pm \text{7.22}$ & $\text{80.49}$ & \cellcolor{light-gray}$\textbf{\text{100.00}}$ & $\text{65.37} \pm \text{17.23}$ & $\text{41.08}$ & $\text{94.54}$ \\
\midrule
\multicolumn{7}{l}{\textit{Centralized Methods (Single Global Model)}} \\
FedAvg~\cite{mcmahan2017communication} & $\text{85.96} \pm \text{2.30}$ & $\text{82.20}$ & $\text{89.20}$ & $\text{64.71} \pm \text{15.66}$ & $\text{48.50}$ & $\text{95.15}$ \\
FedProx~\cite{li2020federated} & $\text{79.91} \pm \text{11.24}$ & $\text{57.51}$ & $\text{86.58}$ & $\text{65.46} \pm \text{18.77}$ & $\text{42.93}$ & $\text{96.06}$ \\
FedMTL~\cite{smith2017federated} & $\text{92.46} \pm \text{6.75}$ & $\text{82.20}$ & \cellcolor{light-gray}$\textbf{\text{100.00}}$ & $\text{69.20} \pm \text{15.29}$ & $\underline{\text{54.19}}$ & $\text{96.46}$ \\
\midrule
\multicolumn{7}{l}{\textit{Personalized Methods (Single Model per Client)}} \\
APFL~\cite{deng2020adaptive} & $\text{91.97} \pm \text{7.00}$ & $\text{82.20}$ & $\text{99.77}$ & $\text{67.83} \pm \text{15.63}$ & $\text{52.92}$ & $\text{95.45}$ \\
Ditto~\cite{tian2021ditto} & $\text{92.37} \pm \text{7.17}$ & $\text{80.73}$ & \cellcolor{light-gray}$\textbf{\text{100.00}}$ & $\underline{\text{69.51} \pm \text{15.72}}$ & $\text{52.20}$ & $\underline{\text{96.66}}$ \\
FedRep~\cite{collins2021exploiting} & $\text{92.71} \pm \text{6.47}$ & $\underline{\text{82.93}}$ & \cellcolor{light-gray}$\textbf{\text{100.00}}$ & $\text{64.38} \pm \text{16.51}$ & $\text{47.96}$ & $\text{94.54}$ \\
FedAMP~\cite{huang2021personalized} & $\underline{\text{92.76} \pm \text{6.72}}$ & $\text{82.20}$ & \cellcolor{light-gray}$\textbf{\text{100.00}}$ & $\text{67.41} \pm \text{17.12}$ & $\text{45.84}$ & \cellcolor{light-gray}$\textbf{\text{96.76}}$ \\
\midrule
\multicolumn{7}{l}{\textit{Personalized Methods (Multiple Server Models)}} \\
IFCA~\cite{ghosh2020efficient} & $\text{82.05} \pm \text{21.88}$ & $\text{40.24}$ & \cellcolor{light-gray}$\textbf{\text{100.00}}$ & $\text{53.61} \pm \text{20.45}$ & $\text{23.23}$ & $\text{85.74}$ \\
\textbf{FedFew (Ours)} & \cellcolor{light-gray}$\textbf{\text{92.84}} \boldsymbol{\pm} \textbf{\text{6.08}}$ & \cellcolor{light-gray}$\textbf{\text{83.90}}$ & $\text{99.77}$ & \cellcolor{light-gray}$\textbf{\text{69.57}} \boldsymbol{\pm} \textbf{\text{14.59}}$ & \cellcolor{light-gray}$\textbf{\text{55.40}}$ & $\text{95.35}$ \\
\bottomrule
\end{tabular}
        }

    \end{minipage}
    \hfill
    \begin{minipage}{0.38\textwidth}
        \centering
        \begin{subfigure}{0.9\textwidth}
            \centering
            \includegraphics[width=\textwidth]{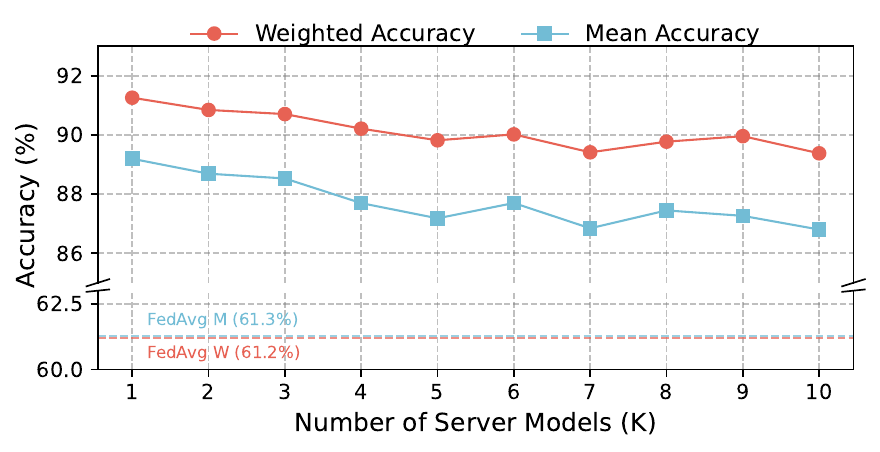}
            \caption{Impact of K}
            \label{fig:ablation_k}
        \end{subfigure}

        \begin{subfigure}{0.9\textwidth}
            \centering
            \includegraphics[width=\textwidth]{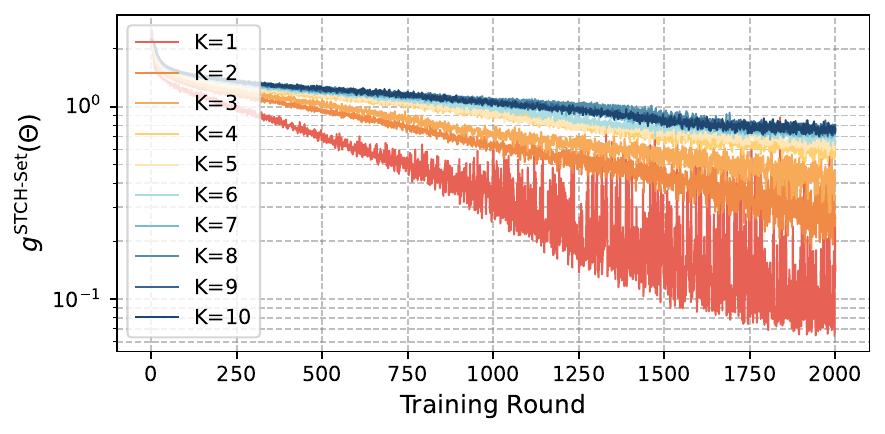}
            \caption{Convergence of $g^{\text{STCH-Set}}$}
            \label{fig:convergence}
        \end{subfigure}
        \vspace{-0.5\baselineskip}
        \captionof{figure}{\textbf{Sensitivity Studies.} (a) Test accuracy vs K on CIFAR-10. FedAvg baselines (dashed) shown for comparison. (b) Evolution over training rounds (log scale) for different K values.}
        \label{fig:ablation}
    \end{minipage}
    \vspace{-1.5\baselineskip}
\end{table*}

\bheading{Datasets.}
We evaluate our method on benchmark datasets with controlled heterogeneity and real-world medical imaging datasets under two distinct settings: \textit{pathological} (extreme label imbalance) and \textit{practical} (realistic label skew via Dirichlet distribution or natural partitions).

In benchmark datasets, under the \textit{pathological setting},
% For benchmark datasets, we employ standard vision and NLP datasets to evaluate performance under controlled heterogeneity. For the \textit{pathological setting}, 
we use CIFAR-100~\cite{krizhevsky2009learning} partitioned by assigning 2 classes per client ($M \in \{10, 20\}$), creating extreme label imbalance. Under the \textit{practical setting}, we partition data using Dirichlet ($\alpha=0.5$) distribution~\cite{zhao2018federated}, where smaller $\alpha$ induces stronger label skew.
Specifically, we evaluate on CIFAR-10~\cite{krizhevsky2009learning} and CIFAR-100 ($M \in \{10, 20\}$ clients), TinyImageNet ($M=10$), and AG News~\cite{zhang2015character} ($M=20$). We also include FEMNIST~\cite{caldas2018leaf} ($M=20$) with natural user-based partitioning.

We further validate our method on real-world medical datasets, where data heterogeneity arises from natural domain shifts across medical institutions. Kvasir~\cite{pogorelov2017kvasir} is a gastrointestinal disease detection dataset containing endoscopic images across 8 classes (polyps, ulcerative colitis, etc.), which we partition among $M=5$ clients using Dirichlet ($\alpha=0.5$) to simulate hospitals with different disease prevalence. FedISIC2019~\cite{NEURIPS2022_232eee8e} is a skin lesion classification dataset from the ISIC 2019 challenge with 8 diagnostic categories, where the data naturally originates from $M=6$ different medical centers, each with distinct imaging equipment and patient demographics, creating realistic cross-institutional heterogeneity.

% For real-world medical dataset, we include two challenging medical imaging datasets where data heterogeneity arises from natural domain shifts across medical institutions. Kvasir~\cite{pogorelov2017kvasir} is a gastrointestinal disease detection dataset containing endoscopic images across 8 classes (polyps, ulcerative colitis, etc.), which we partition among $M=5$ clients using Dirichlet($\alpha=0.5$) to simulate hospitals with different disease prevalence. FedISIC2019~\cite{NEURIPS2022_232eee8e} is a skin lesion classification dataset from the ISIC 2019 challenge with 8 diagnostic categories, where the data naturally originates from $M=6$ different medical centers, each with distinct imaging equipment and patient demographics, creating realistic cross-institutional heterogeneity.

\bheading{Baselines.}
We compare FedFew against nine baseline approaches spanning different personalization strategies. For \textit{centralized methods}, we include FedAvg~\cite{mcmahan2017communication} and FedProx~\cite{li2020federated}, which train a single global model shared by all clients. We also include FedMTL~\cite{smith2017federated}, a multi-task learning approach that learns task relationships to enable model personalization. For \textit{personalized methods with single model per client}, we evaluate APFL~\cite{deng2020adaptive}, Ditto~\cite{tian2021ditto}, FedRep~\cite{collins2021exploiting}, and FedAMP~\cite{huang2021personalized}, which maintain separate personalized models for each client. For \textit{personalized methods with multiple server models}, we compare against IFCA~\cite{ghosh2020efficient}, our core competitor that also uses $K$ shared models but relies on hard clustering. For medical datasets, we further include a \textit{local-only baseline} trained solely on local data without federation to assess the benefit of collaborative learning.

\bheading{Implementation.}
We employ a 4-layer CNN~\cite{mcmahan2017communication} for CIFAR-10/100, FEMNIST, and TinyImageNet, TextCNN~\cite{zhang2025pfllib} for AG News, and ResNet-18~\cite{he2016deep} for medical datasets. Our method utilizes $K=3$ server models across all experiments. Training proceeds for 2000 communication rounds on benchmark datasets and 1000 rounds on medical datasets to mitigate overfitting on smaller-scale medical data, with 1 local epoch per round. Batch sizes are configured based on dataset characteristics: 100 for datasets with lower overfitting tendency and 50 for those more susceptible to overfitting. Learning rates are selected according to dataset complexity, ranging from 0.0005 to 0.005. Full client participation is enforced in each communication round. Comprehensive hyperparameter configurations are in the supplementary material.

% We use the 4-layer CNN model~\cite{mcmahan2017communication} for CIFAR-10/100, FEMNIST, and TinyImageNet, TextCNN~\cite{zhang2025pfllib} for AG News, and ResNet-18~\cite{he2016deep} for the medical datasets (Kvasir and FedISIC). For our method, we set $K=3$ server models across all datasets. We train for 2000 communication rounds on benchmark datasets and 1000 rounds on medical datasets (to avoid overfitting on smaller medical datasets), with 1 local epoch per round. We select batch sizes based on overfitting tendency: batch size 100 for datasets less prone to overfitting (FEMNIST, AG News, Kvasir) and batch size 50 for others that are more susceptible to overfitting (CIFAR-10/100, TinyImageNet, FedISIC). Learning rates are tuned according to dataset complexity, ranging from 0.0005 to 0.01. All clients participate in each round. Detailed hyperparameters are provided in the supplementary material.

\subsection{Main Results}
\bheading{Benchmark Datasets.}
Table~\ref{tab:overall_results} presents the performance comparison on benchmark datasets under both pathological and practical heterogeneity settings.

Our FedFew method demonstrates superior performance across diverse data distributions and client configurations.
In pathological heterogeneous settings with CIFAR-100, FedFew achieves 64.98\% accuracy with $M=20$ clients, outperforming the best personalized baseline (FedRep at 61.46\%).
Under practical heterogeneous settings, FedFew consistently ranks first or second across all datasets.
Notably, on CIFAR-100 with $M=20$ clients, FedFew achieves 53.69\% accuracy, surpassing the best baseline by 7.02\%.
On TinyImageNet, FedFew improves over the strongest baseline (FedRep) by 3.07\%, demonstrating its effectiveness on large-scale image classification. For AG News text classification, FedFew achieves 96.07\% accuracy, outperforming FedRep by 1.39\%.

\bheading{Real-world Medical Dataset.}
Table~\ref{tab:medical_results} presents results on medical imaging datasets with naturally heterogeneous distributions.
On the Kvasir gastrointestinal dataset, FedFew achieves the highest average accuracy (92.84\%) and best worst-case performance, demonstrating robustness across diverse medical institutions.
For FedISIC skin lesion classification, FedFew attains 69.57\% average accuracy with 55.40\% minimum accuracy, significantly outperforming IFCA which suffers from severe performance degradation.

Notably, methods adopting multi-objective perspectives (FedFew and FedMTL) both achieve significantly higher minimum accuracies compared to other baselines (at least +1.2\% improvement over other baselines, +13.0\% over local-only on FedISIC), showcasing the advantage of multi-objective optimization in balancing performance across heterogeneous clients.

\begin{figure}[t]
    \centering
    \includegraphics[width=0.38\textwidth]{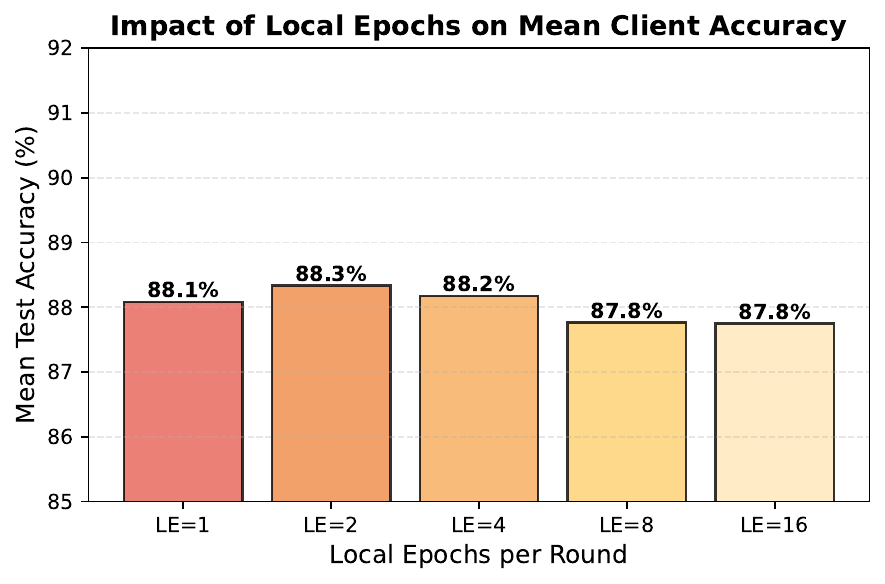}
    \vspace{-0.75\baselineskip}
    \caption{\textbf{Mean client accuracy comparison across communication configurations.} All configurations achieve comparable mean client accuracy (87.8--88.3\%), demonstrating that our method is robust to different communication-computation trade-offs. }
    \label{fig:rounds_accuracy_suppl}
    \vspace{-1.5\baselineskip}
\end{figure}

\subsection{Sensitivity Analysis and Convergence}

We conduct sensitivity analysis and convergence studies on CIFAR-10 with Dirichlet-$\alpha=0.5$ heterogeneity across $M=20$ clients. Throughout this section, we use $K=3$ server models by default, except when explicitly varying $K$ to study its impact on performance.

\subsubsection{Effect of Number of Server Models}~\label{subsec:ablation_K}
% To investigate the influence of the number of server models $K$ on performance and convergence, we vary $K$ from 1 to 10 and analyze the results.

\bheading{Robust Test Accuracy.} Figure~\ref{fig:ablation_k} presents test accuracy for $K \in \{1, 2, \ldots, 10\}$.
Our method achieves weighted accuracy ranging from 89.4 to 91.3\% across all K values, consistently demonstrating substantial improvement over FedAvg's 61.2\% baseline.
Notably, the single-model configuration ($K=1$) attains the highest accuracy of 91.3\%.
This non-monotonic relationship between $K$ and performance can be attributed to two factors: (1) \textit{Underlying data homogeneity:} CIFAR-10 is sampled from a single distribution, a single well-optimized model can perform well across clients; (2) \textit{Optimization complexity:} larger $K$ expands the parameter space, leading to slower convergence within the fixed training rounds.

\bheading{Convergence of STCH-Set Objective.}
To validate this optimization complexity hypothesis, we examine convergence behavior in Figure~\ref{fig:convergence}, which tracks the evolution of $g^{\text{STCH-Set}}(\Theta)$ over 2,000 training rounds (log scale).
We observe consistent monotonic decrease across all K values, confirming the stability of our gradient-based optimization.
However, with increasing $K$, the convergence speed decreases significantly, corroborating that larger model sets indeed create more challenging optimization landscapes that hinder both convergence rate and final performance.

% \textit{Correlation with accuracy.} Different K values converge to distinct final g values: K=1 achieves the lowest (g=0.14), while K=8 reaches the highest (g=0.81). This strongly correlates with test accuracy---configurations with lower final g values exhibit higher accuracy, confirming that minimizing our smoothed scalarization indeed drives toward better Pareto optimality. The smooth, oscillation-free convergence curves demonstrate the stability of gradient-based optimization even with non-convex neural network parameterization and stochastic client sampling.

\begin{figure}[t]
    \centering
    \includegraphics[width=0.38\textwidth]{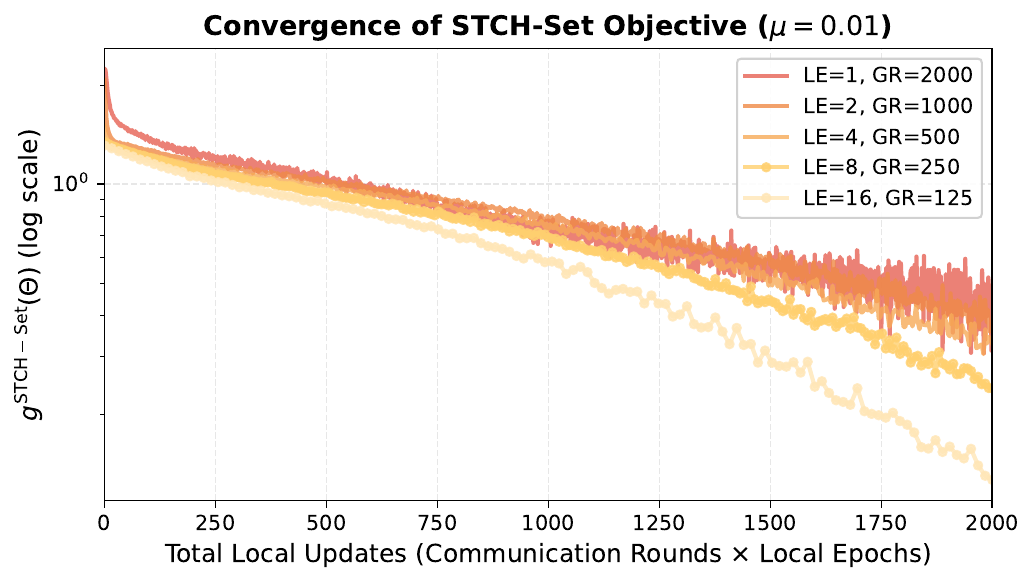}
    \vspace{-0.75\baselineskip}
    \caption{\textbf{Communication-computation trade-off.} Convergence of $g^{\text{STCH-Set}}$ vs total local updates for different (local epochs, communication rounds) configurations. Local epochs (LE) $\in \{1, 2, 4, 8, 16\}$ with corresponding communication rounds (GR) to maintain 2000 total updates.}
    \label{fig:ablation_rounds}
    \vspace{-1.5\baselineskip}
\end{figure}

\subsubsection{Communication Efficiency}\label{subsec:communication}
We examine the trade-off between communication frequency and local computation by varying the number of local epochs per round while maintaining constant total local updates (communication rounds $\times$ local epochs = 2000).

Figure~\ref{fig:ablation_rounds} shows the convergence of $g^{\text{STCH-Set}}(\Theta)$ across five configurations.
Configurations with more local epochs (LE=16) exhibit faster convergence and lower variance compared to frequent communication (LE=1). Specifically, LE=16 achieves the steepest descent and most stable optimization trajectory, demonstrating that our method maintains or even improves performance while drastically reducing communication overhead. All configurations reach comparable mean client accuracies as shown in Figure~\ref{fig:rounds_accuracy_suppl}.

\section{Conclusion}
\label{sec:conclusion}

In this paper, we propose FedFew, a novel personalized federated learning algorithm that tackles the scalability challenge in PFL through a Few-for-Many framework, where a small set of $K$ server models collaboratively serve $M$ clients with $K \ll M$. Our approach reformulates PFL as a multi-objective optimization problem and leverages the smooth Tchebycheff set scalarization for effective gradient optimization. Extensive experiments on multiple benchmarks, including healthcare collaborations and edge computing scenarios, demonstrate that FedFew achieves superior personalization performance while maintaining computational efficiency and scalability.

% The framework's ability to maintain personalization quality with significantly fewer models than clients opens new possibilities for practical deployment of personalized FL in resource-constrained environments.
%  such as healthcare collaborations and edge computing.

% ============================================================================

{
    \small
    \bibliographystyle{ieeenat_fullname}
    \bibliography{main}
}

% WARNING: do not forget to delete the supplementary pages from your submission 
\appendix
% Supplementary Material for "FedFew: Few-for-Many Personalized Federated Learning"

\section{Theoretical Analysis}
\label{sec:theory_suppl}

This supplementary material provides detailed proofs for the theorems presented in the main paper. We organize the material as follows: (A) proof of convergence of K-for-M framework (Theorem~\ref{thm:convergence} from Section~\ref{subsec:set_k_4_m}), (B) proof of uniform smooth approximation (Theorem~\ref{thm:smooth_error} from Section~\ref{subsec:alg_convergence}), and (C) proof of Pareto properties (Theorem~\ref{thm:pareto_properties} from Section~\ref{subsec:alg_convergence}), including both Pareto optimality and Pareto stationarity.

\subsection{Notation}
\label{sec:notation}

We begin by establishing the notation used for all proofs. Table~\ref{tab:notation} summarizes the key symbols and their definitions.

\subsection{Theorem~\ref{thm:convergence}: K-for-M Convergence}
\label{sec:proof_convergence}

We now provide the complete proof of Theorem~\ref{thm:convergence} from the main paper. Following the notation in the theorem statement, we use $\Theta^{(K)} = \{\theta_1, \ldots, \theta_K\}$ to denote the K-for-M solution obtained by minimizing empirical losses over finite samples.

\bheading{Proof roadmap.}
The proof proceeds in three steps. First, we establish that each model in the K-for-M solution lies on the Pareto frontier (Lemma~\ref{lem:k4m_pareto}). Second, we bound the Pareto coverage gap, which measures how well $K$ models approximate $M$ personalized optima (Lemma~\ref{lem:pareto_bound}). This bound depends on the maximum heterogeneity across clients. Third, we bound the statistical error arising from finite-sample learning (Lemma~\ref{lem:statistical}). Combining these two independent error sources yields the final convergence rate.

To quantify the approximation quality, we introduce the notion of maximum heterogeneity:

\begin{definition}[Maximum Heterogeneity]
    \label{def:heterogeneity}
    The maximum pairwise heterogeneity is defined as:
    \begin{equation}
        \Delta_{het} = \max_{i,j \in [M]} \left[L_i(\theta_j^*) - L_i(\theta_i^*)\right]
    \end{equation}
    This measures the worst-case loss degradation when a client uses another client's optimal model instead of its own.
\end{definition}

\begin{table}[t]
    \centering
    \caption{Key notation for theoretical analysis. Stars ($^*$) denote optimal or population-level quantities, while hats ($\hat{\cdot}$) denote empirical estimators.}
    \label{tab:notation}
    \begin{tabular}{lp{0.74\columnwidth}}
        \toprule
        \textbf{Symbol}     & \textbf{Description}                                                                \\
        \midrule
        $M$                 & number of heterogeneous clients                                                     \\
        \hline
        $K$                 & number of shared models in the K-for-M framework                                    \\
        \hline
        $L_i(\theta)$       & expected (population) loss for client $i$ with model $\theta$                       \\
        \hline
        $\hat{L}_i(\theta)$ & empirical loss for client $i$ based on finite samples                               \\
        \hline
        $\theta_i^*$        & optimal personalized model for client $i$, defined as $\arg\min_\theta L_i(\theta)$ \\
        \hline
        $\Theta^{(K)}_*$    & optimal K-for-M solution (minimizing population losses)                             \\
        \hline
        $\Theta^{(K)}$      & empirical K-for-M solution (minimizing empirical losses)                            \\
        \hline
        $n$                 & average sample size per client                                                      \\
        \hline
        $d$                 & VC dimension of hypothesis class $\Theta$                                           \\
        \bottomrule
    \end{tabular}
\end{table}

\subsubsection{Pareto Coverage Analysis}
\label{sec:pareto}
We analyze how well $K$ models can approximate the Pareto set. Following the Pareto optimality definition in the main paper, let $\mathcal{P} = \{\theta : \nexists \theta' \text{ s.t. } L_i(\theta') \leq L_i(\theta) \,\forall i \text{ with } L_j(\theta') < L_j(\theta) \text{ for some } j\}$ denote the set of all Pareto optimal models.

The K-for-M solution $\Theta^{(K)}_*$ is defined as:
\begin{equation}
    \Theta^{(K)}_* = \arg\min_{\theta_1, \ldots, \theta_K} \sum_{i=1}^M \min_{k \in [K]} L_i(\theta_k)
\end{equation}

\begin{lemma}[Pareto Optimality of K-for-M Solution]
    \label{lem:k4m_pareto}
    Each model $\theta_k^* \in \Theta^{(K)}_*$ is Pareto optimal, i.e., $\theta_k^* \in \mathcal{P}$.
\end{lemma}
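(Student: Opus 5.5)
The plan is an exchange argument by contradiction, using only the definition of $\Theta^{(K)}_*$ as a minimizer of $G(\theta_1,\ldots,\theta_K)=\sum_{i=1}^M \min_{k\in[K]} L_i(\theta_k)$.

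\textbf{Setup.} Fix an optimal set $\Theta^{(K)}_*=\{\theta_1^*,\ldots,\theta_K^*\}$. For each client $i$, pick an assignment $k_i\in\arg\min_k L_i(\theta_k^*)$. Let $C_k=\{i : k_i=k\}$ be the clients served by model $k$.

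\textbf{Exchange step.} Suppose, for contradiction, that some $\theta_k^*$ is not Pareto optimal. Then there is a $\theta'$ with $L_i(\theta')\le L_i(\theta_k^*)$ for all $i\in[M]$, and $L_j(\theta')<L_j(\theta_k^*)$ for some $j$. Form $\Theta'$ from $\Theta^{(K)}_*$ by replacing $\theta_k^*$ with $\theta'$. Every other model is unchanged and $\theta'$ is no worse than $\theta_k^*$ on every client, so $\min_{k'} L_i$ over $\Theta'$ is at most its value over $\Theta^{(K)}_*$ for every $i$. Hence $G(\Theta')\le G(\Theta^{(K)}_*)$. If the strictly improved client $j$ lies in $C_k$ and $k$ is its unique best model, then $\min_{k'} L_j$ drops strictly. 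Then $G(\Theta')<G(\Theta^{(K)}_*)$, contradicting optimality.

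\textbf{Main obstacle.} The strict improvement may land on a client $j\notin C_k$, or $C_k$ may be empty. In that case the exchange only shows that $\Theta'$ is another minimizer, not a contradiction. Indeed, as literally stated the lemma can fail: a model that serves no client can be arbitrary.

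I would close this gap in one of two ways, matching the hypotheses of Theorem~\ref{thm:pareto_properties}.

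\emph{(a) Uniqueness.} Assume the minimizer of $G$ is unique. Then $\Theta'$ being a minimizer forces $\Theta'=\Theta^{(K)}_*$, so $\theta'=\theta_k^*$. This contradicts the strict inequality $L_j(\theta')<L_j(\theta_k^*)$, and the lemma follows.

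\emph{(b) Existence form.} Prove that among the minimizers there is one whose models are all Pareto optimal. This uses the fact that every $\theta$ is weakly dominated by some Pareto optimal $\bar\theta$. That fact holds, for example, when the $L_i$ are lower semicontinuous with compact sublevel sets: minimize $\sum_i L_i$ over the dominating set $\{\theta' : L_i(\theta')\le L_i(\theta)\ \forall i\}$, and note that a minimizer there is Pareto optimal. Replacing each $\theta_k^*$ by such a $\bar\theta_k$ keeps $G$ at its minimum by the exchange step.

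\textbf{Weaker conclusion without assumptions.} Without either assumption, the same argument still gives that each $\theta_k^*$ with $C_k\neq\emptyset$ is Pareto optimal for the sub-problem $\{L_i\}_{i\in C_k}$. I would record this as the unconditional version, since it is what the coverage bound in Lemma~\ref{lem:pareto_bound} actually needs.
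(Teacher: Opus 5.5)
Your proposal is correct, and it is more careful than the paper's own proof. The paper's argument is exactly your exchange step. It takes a dominator $\theta'$ of $\theta_k^*$, asserting without justification that it can be chosen in $\mathcal{P}$. It swaps $\theta'$ in and declares that $\sum_i \min_k L_i(\theta_k)$ strictly decreases. It never checks that the strictly improved client $j$ is one for which $\theta_k^*$ attains the minimum, which is precisely the gap you isolate.

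Your conclusion that the lemma is false as stated is right, and it fails even when every cluster is nonempty. Take $M=K=2$, $L_1(\theta)=\max\{0,|\theta|-1\}^2$ and $L_2(\theta)=(\theta-10)^2$ on $\mathbb{R}$. The pair $(0,10)$ has objective value $0$, so it is optimal, with $C_1=\{1\}$ and $C_2=\{2\}$. Yet $\theta=1$ gives $L_1(1)=0=L_1(0)$ and $L_2(1)=81<100=L_2(0)$, so $0\notin\mathcal{P}$.

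So the paper's one-liner buys a clean but untrue statement, while your route buys three true ones:
\begin{itemize}
\item Pareto optimality of every member, under uniqueness of the minimizer.
\item Existence of a minimizer whose members are all Pareto optimal, under lower semicontinuity and compact sublevel sets.
\item Unconditional Pareto optimality of $\theta_k^*$ for the served sub-problem $\{L_i\}_{i\in C_k}$.
\end{itemize}
All three arguments check out. The dominance-set minimization in (b) is also what would be needed to justify the paper's claim that the dominator lies in $\mathcal{P}$, though the contradiction never uses that claim.

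A few small corrections. In the exchange step, uniqueness of the best model is unnecessary. Membership $j\in C_k$ already gives a new minimum at most $L_j(\theta')<L_j(\theta_k^*)=\min_{k'}L_j(\theta_{k'}^*)$. The exact criterion for a strict decrease is $L_j(\theta')<\min_{k'}L_j(\theta_{k'}^*)$ for some $j$.

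In (a), the objective is invariant under relabeling the $K$ slots, so uniqueness must be understood up to permutation. Read as equality of multisets, $\Theta'=\Theta^{(K)}_*$ still forces $\theta'=\theta_k^*$.

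Finally, your remark about what Lemma~\ref{lem:pareto_bound} needs is off target. The paper's proof of that lemma never invokes Lemma~\ref{lem:k4m_pareto}; it relies instead on an assumed alignment of $\Theta^{(K)}_*$ with a client partition. In fact that bound needs neither. Compare $G(\Theta^{(K)}_*)$ with $G$ evaluated at the personal optima of any $K$ distinct clients, and subtract $\sum_i L_i(\theta_i^*)$. This leaves $0$ for those $K$ clients and at most $\Delta_{het}$ for each of the other $M-K$.
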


\begin{proof}
    Suppose for contradiction that some $\theta_k^* \notin \mathcal{P}$. Then there exists a Pareto-dominating model $\theta' \in \mathcal{P}$ with $L_i(\theta') \leq L_i(\theta_k^*)$ for all $i$ and $L_j(\theta') < L_j(\theta_k^*)$ for some $j$. Replacing $\theta_k^*$ with $\theta'$ in $\Theta^{(K)}_*$ strictly decreases the objective, contradicting optimality.
\end{proof}

\bheading{Model Capacity Gap.}
Define the Pareto endpoints as the $M$ extreme points on the Pareto frontier:
\begin{equation}
    \theta_i^* = \arg\min_\theta L_i(\theta), \quad i = 1, \ldots, M
\end{equation}

The model capacity gap measures  how well $K$ Pareto points approximate these $M$ endpoints. For client $i$, the gap is defined as:
\begin{equation}
    \text{Gap}_i(K) = \min_{k \in [K]} L_i(\theta_k^*) - L_i(\theta_i^*)
\end{equation}

\begin{lemma}[Pareto Coverage Bound]
    \label{lem:pareto_bound}
    Under a clustering assumption where clients can be approximately partitioned into $K$ groups with balanced sizes, the average model capacity gap is bounded by:
    \begin{equation}
        \frac{1}{M}\sum_{i=1}^M \text{Gap}_i(K) \leq \left(1 - \frac{K}{M}\right) \cdot \Delta_{het}
    \end{equation}
\end{lemma}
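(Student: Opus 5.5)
Because $\Theta^{(K)}_*$ minimizes $\sum_i \min_k L_i(\theta_k)$, we can compare it with any explicit $K$-model configuration. The plan is to build one such configuration out of clients' personal optima. We pick a set $S\subseteq[M]$ of $K$ "representative" clients, one per group of the assumed (approximately balanced) partition, and form the candidate $\tilde\Theta=\{\theta_s^*: s\in S\}$. By optimality,
\begin{equation*}
\sum_{i=1}^M \min_{k\in[K]} L_i(\theta_k^*) \le \sum_{i=1}^M \min_{s\in S} L_i(\theta_s^*).
\end{equation*}
Subtracting $\sum_i L_i(\theta_i^*)$ from both sides gives $\sum_i \mathrm{Gap}_i(K) \le \sum_i \bigl[\min_{s\in S} L_i(\theta_s^*) - L_i(\theta_i^*)\bigr]$.

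Next we bound the right-hand side term by term.
\begin{itemize}
\item For each representative $i\in S$, the term is $\min_{s} L_i(\theta_s^*) - L_i(\theta_i^*) \le L_i(\theta_i^*) - L_i(\theta_i^*) = 0$.
\item For each of the remaining $M-K$ clients $i\notin S$, choose $s$ to be its own group's representative. Then $L_i(\theta_s^*) - L_i(\theta_i^*) \le \Delta_{het}$ directly from Definition~\ref{def:heterogeneity}.
\end{itemize}
Summing the two cases gives $\sum_i \mathrm{Gap}_i(K)\le (M-K)\Delta_{het}$. Dividing by $M$ yields the claimed $(1-K/M)\Delta_{het}$.

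Note where the clustering assumption is used. Only the existence of $K$ distinct representatives, one per group, is actually needed. Assignment to the own-group representative can be replaced by the crude worst-case bound $\Delta_{het}$, which holds for every pair, so balance is not really needed for the stated (loose) bound. I would remark that a balanced partition and within-group heterogeneity $\Delta_{in}\le\Delta_{het}$ would sharpen $\Delta_{het}$ to $\Delta_{in}$.

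The main obstacle is subtle rather than technical: making sure the objective defining $\Theta^{(K)}_*$ (the sum of minima) is the one that $\mathrm{Gap}_i$ is evaluated on, so the comparison inequality is legitimate. I would also check the boundary cases. If $K\ge M$ we take $S=[M]$ and the gap is $0$. The case $K=1$ reduces to $\sum_i [L_i(\theta_s^*)-L_i(\theta_i^*)]\le (M-1)\Delta_{het}$. Both are consistent with the bound.
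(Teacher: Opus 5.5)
Your proof is correct, but at the key step it takes a different route from the paper. The paper never uses the optimality of $\Theta^{(K)}_*$. Instead, it builds into its ``clustering assumption'' the claim that the optimal set literally contains each group representative's personal optimum $\theta_{r_k}^*$. Then $\text{Gap}_{r_k}(K)=0$, and every other client is bounded individually by $L_i(\theta_{r_k}^*)-L_i(\theta_i^*)\le\Delta_{het}$. The final count of $K$ zero terms and $M-K$ terms bounded by $\Delta_{het}$ is the same as yours. You instead compare $\Theta^{(K)}_*$ with the feasible candidate $\{\theta_s^*: s\in S\}$ through the sum-of-minima objective, so what the paper assumes becomes a consequence of optimality.

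This gives a genuinely stronger statement. The bound holds for every $K\le M$ with any $K$ distinct clients as representatives, and needs no clustering or balance hypothesis; the paper's proof does not actually use balance either. It also avoids the paper's structural assumption, which fails in general: a minimizer of the sum of minima typically puts each model at a compromise point for its group, not at one client's optimum.

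The price is twofold. You control only the total gap, not each $\text{Gap}_i(K)$ separately. And the argument depends on $\Theta^{(K)}_*$ being the minimizer of the unweighted sum, which is how the appendix defines it. If it were instead the minimizer of a Tchebycheff-type max objective, which is what FedFew actually optimizes, the same comparison would control only the worst client and could not produce the factor $(1-K/M)$.

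One small correction to your boundary check. For $K>M$ the right-hand side $(1-K/M)\Delta_{het}$ is negative whenever $\Delta_{het}>0$, while every gap is nonnegative. So the lemma must be read with $K\le M$, and of your $K\ge M$ cases only $K=M$ is consistent with it.
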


\begin{proof}
    Assume clients partition into $K$ groups $\mathcal{G}_1, \ldots, \mathcal{G}_K$ with $|\mathcal{G}_k| \approx M/K$. Under the clustering assumption, the K-for-M solution aligns with this partition: each group $\mathcal{G}_k$ has a representative client $r_k$ whose optimal model $\theta_{r_k}^*$ is included in $\Theta^{(K)}_*$.

    For each group $\mathcal{G}_k$:
    \begin{itemize}
        \item \textbf{Representative clients} ($K$ clients): For $i = r_k$, we have $\text{Gap}_{r_k}(K) = 0$ since $\theta_{r_k}^* \in \Theta^{(K)}_*$.
        \item \textbf{Non-representative clients} ($M-K$ clients): For $i \in \mathcal{G}_k \setminus \{r_k\}$:
              \begin{equation}
                  \text{Gap}_i(K) = L_i(\theta_{r_k}^*) - L_i(\theta_i^*) \leq \Delta_{het}
              \end{equation}
    \end{itemize}

    Averaging over all $M$ clients:
    \begin{align}
        \frac{1}{M}\sum_{i=1}^M \text{Gap}_i(K) & = \frac{1}{M} \left[\sum_{k=1}^K 0 + \sum_{k=1}^K \sum_{i \in \mathcal{G}_k \setminus \{r_k\}} \text{Gap}_i(K)\right] \\
                                                & \leq \frac{1}{M} \cdot (M-K) \cdot \Delta_{het}                                                                       \\
                                                & = \left(1 - \frac{K}{M}\right) \cdot \Delta_{het}
    \end{align}

    Note that when $K=M$, every client is a representative, yielding zero gap.
\end{proof}

\subsubsection{Statistical Convergence Analysis}
\label{sec:statistical}

We now analyze the statistical error arising from learning with finite samples.

With finite samples, we optimize empirical losses $\{\hat{L}_i(\theta)\}_{i=1}^M$ instead of population losses $\{L_i(\theta)\}_{i=1}^M$. The empirical K-for-M solution is:
\begin{equation}
    \Theta^{(K)} = \arg\min_{\theta_1, \ldots, \theta_K} \sum_{i=1}^M \min_{k \in [K]} \hat{L}_i(\theta_k)
\end{equation}

\subsubsection{Convergence Bound}

\begin{lemma}[Statistical Convergence]
    \label{lem:statistical}
    For a hypothesis class $\Theta$ with VC dimension $d$, with probability at least $1 - \delta$:
    \begin{align}
         & \frac{1}{M}\sum_{i=1}^M \left|\min_{k} L_i(\theta_k) - \min_{k} L_i(\theta_k^*)\right| \notag \\
         & \qquad \leq \mathcal{O}\left(\sqrt{\frac{Kd + \log(M/\delta)}{n}}\right)
    \end{align}
    where $n$ is the average sample size per client.
\end{lemma}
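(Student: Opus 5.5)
We follow the standard ERM-plus-uniform-convergence route, applied to the set-valued objective $\Phi(\Theta) = \frac{1}{M}\sum_{i=1}^M \min_{k\in[K]} L_i(\theta_k)$ and its empirical counterpart $\hat\Phi(\Theta) = \frac{1}{M}\sum_{i=1}^M \min_{k} \hat L_i(\theta_k)$. Here $\Theta^{(K)}$ minimizes $\hat\Phi$ and $\Theta^{(K)}_*$ minimizes $\Phi$.

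\textbf{Step 1: transfer per-client deviations through the min.} The map $(a_1,\dots,a_K)\mapsto\min_k a_k$ is $1$-Lipschitz in $\ell_\infty$. Hence for any set $\Theta$ of $K$ models
\[
\Bigl|\min_k L_i(\theta_k)-\min_k \hat L_i(\theta_k)\Bigr| \le \sup_{\theta}\bigl|L_i(\theta)-\hat L_i(\theta)\bigr| =: \varepsilon_i .
\]
Consequently $\sup_\Theta|\Phi(\Theta)-\hat\Phi(\Theta)|\le \frac1M\sum_i\varepsilon_i$.

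\textbf{Step 2: bound each $\varepsilon_i$.} Assume a bounded loss in $[0,1]$ and VC dimension $d$. A standard uniform convergence bound, combined with a union bound over the $M$ clients at level $\delta/M$, gives with probability at least $1-\delta$, simultaneously for all $i$,
\[
\varepsilon_i\le C\sqrt{\frac{d+\log(M/\delta)}{n_i}} .
\]
The factor $K$ enters if one instead applies uniform convergence directly to the composite class $\{x\mapsto \min_k \ell(f(x;\theta_k),y)\}$. By Sauer--Shelah, its growth function is at most the $K$-th power of the base class's, so its effective dimension is $O(Kd\log K)$. This is the same computation that yields the stated rate $\sqrt{(Kd+\log(M/\delta))/n}$. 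I would present the composite-class version so as to match the statement, noting that the Lipschitz route in Step 1 actually gives the sharper $\sqrt{d}$.

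\textbf{Step 3: ERM comparison and a one-sided reading of the absolute value.} Chain the three inequalities
\[
\Phi(\Theta^{(K)})\le \hat\Phi(\Theta^{(K)})+\bar\varepsilon\le \hat\Phi(\Theta^{(K)}_*)+\bar\varepsilon\le \Phi(\Theta^{(K)}_*)+2\bar\varepsilon,
\]
where $\bar\varepsilon=\frac1M\sum_i\varepsilon_i$. Since $\Theta^{(K)}_*$ minimizes $\Phi$, the averaged signed difference is nonnegative and at most $2\bar\varepsilon$.

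The statement, however, puts the absolute value inside the average. Per client, $\min_k L_i(\theta_k)-\min_k L_i(\theta_k^*)$ can have either sign, because the optimal set only minimizes the sum. This is the main obstacle: absolute per-client deviations are not controlled by ERM alone. I would handle it in one of two ways. The first is to interpret the lemma as the signed average, which is all that Theorem~\ref{thm:convergence} needs, since there we add it to the coverage gap of Lemma~\ref{lem:pareto_bound}. The second is to invoke the clustering assumption of Lemma~\ref{lem:pareto_bound} together with a margin/identifiability condition ensuring the empirical client-to-model assignment matches the population one. Each client then effectively fits its group model, and per-group ERM gives $|\cdot|\le 2\varepsilon_{\max}$ for each $i$.

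\textbf{Step 4: convert to average sample size.} Pass from per-client $n_i$ to the average sample size $n$, assuming comparable $n_i$ (e.g.\ $n_i\ge c\,n$). This yields the final $\mathcal{O}\bigl(\sqrt{(Kd+\log(M/\delta))/n}\bigr)$.
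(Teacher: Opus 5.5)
Your argument has the same skeleton as the paper's proof: a three-term ERM decomposition (deviation, optimization, deviation), uniform convergence for the two deviation terms, and a union bound over the $M$ clients. Where you depart from the paper, the departures are corrections.

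\textbf{Per-client versus averaged comparison.} The paper writes the decomposition separately for each client $i$. It then asserts that the middle term $\min_k \hat{L}_i(\theta_k)-\min_k \hat{L}_i(\theta_k^*)$ is nonpositive ``since $\Theta^{(K)}$ minimizes the empirical objective.'' That is false, because $\Theta^{(K)}$ only minimizes the \emph{sum} over clients. Your Step 3 runs the chain on the averaged objective $\hat\Phi$, which is the valid version, and it gives exactly $0\le \Phi(\Theta^{(K)})-\Phi(\Theta^{(K)}_*)\le 2\bar\varepsilon$.

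\textbf{The factor $K$.} The paper gets it from an asserted $\mathcal{O}(Kd)$ VC dimension of the product class. Your Lipschitz reduction in Step 1 shows the base class suffices. Since $\sqrt{(d+\log(M/\delta))/n}\le\sqrt{(Kd+\log(M/\delta))/n}$, Step 1 already implies the stated rate. The composite-class detour is therefore unnecessary, and by your own count it costs an extra $\log K$.

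\textbf{Sample sizes.} Your comparability assumption $n_i\ge c\,n$ in Step 4 is genuinely needed, and the paper skips it. By convexity of $x\mapsto x^{-1/2}$, the average of per-client rates is at least the rate at the average sample size, so the passage to $n$ does not come for free.

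\textbf{The absolute value.} You are right that neither argument controls the absolute value inside the average. In fact the statement as written is false without further assumptions. Take $K=1$, $M=2$, and losses with $L_1(\theta)=a\theta$ and $L_2(\theta)=a(1-\theta)$ on $[0,1]$. The population objective is then constant, and the empirical minimizer sits at an endpoint chosen by the sign of the sampling noise. So, whichever population minimizer is fixed as $\theta^*$, the averaged absolute deviation is at least $a/2$ with probability bounded away from zero for every $n$, while the signed average is $0$. If you add a flat term such as $\eta(\theta-1/2)^4$ to make the minimizer unique, the per-client deviation decays only like $n^{-1/6}$. Per-client control therefore needs identifiability plus a growth condition, not just ERM.

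\textbf{Your two options.} Option (a), proving the signed average, is the right resolution. It is also all that Theorem~\ref{thm:convergence} uses, since the signed average is what gets added to the coverage gap.

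Option (b) has a gap as stated. Once the empirical and population assignments agree, $\Theta^{(K)}$ does decouple into $K$ group ERM problems. But group ERM controls only $\sum_{i\in\mathcal{G}_k}\bigl[L_i(\theta_k)-L_i(\theta_k^*)\bigr]$, so the same sign problem reappears inside each group. You get $|\cdot|\le 2\varepsilon_{\max}$ per client only if all clients in a group share the same population loss, i.e.\ exact clustering. In that case $\theta_k^*$ is each member's own optimum, which gives the lower bound $0$, and the group bound becomes a per-client bound. The approximate clustering of Lemma~\ref{lem:pareto_bound} does not supply this: there a non-representative client's loss at its group's model may exceed its own optimum by up to $\Delta_{het}$.
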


\begin{proof}
    The K-for-M problem optimizes over the product space $\Theta^K$ with VC dimension $\mathcal{O}(Kd)$. By standard ERM analysis, for each client $i$:
    \begin{align}
         & \min_{k} L_i(\theta_k) - \min_{k} L_i(\theta_k^*) \notag                                                           \\
         & = \underbrace{\left[\min_{k} L_i(\theta_k) - \min_{k} \hat{L}_i(\theta_k)\right]}_{\leq \epsilon(n, Kd)} \notag    \\
         & \quad + \underbrace{\left[\min_{k} \hat{L}_i(\theta_k) - \min_{k} \hat{L}_i(\theta_k^*)\right]}_{\leq 0} \notag    \\
         & \quad + \underbrace{\left[\min_{k} \hat{L}_i(\theta_k^*) - \min_{k} L_i(\theta_k^*)\right]}_{\leq \epsilon(n, Kd)}
    \end{align}
    where the first and third terms are bounded by uniform convergence (Theorem 6.8 in~\cite{shalev2014understanding}), and the second term is non-positive since $\Theta^{(K)}$ minimizes the empirical objective.

    Applying a union bound over $M$ clients and absorbing logarithmic factors yields the stated bound.
\end{proof}

\begin{remark}[Union Bound]
    The union bound (Boole's inequality) states that $P[\bigcup_{i=1}^M E_i] \leq \sum_{i=1}^M P[E_i]$. For $M$ clients each with failure probability $\delta/M$, the total failure probability is at most $\delta$. The exact probability is $1-(1-\delta/M)^M \approx \delta$ for small $\delta/M$.
\end{remark}

\subsubsection{Combining the Bounds}
\label{sec:combining}

The total error for the empirical K-for-M solution decomposes as:
\begin{align}
     & \frac{1}{M}\sum_{i=1}^M \left[\mathbb{E}[L_i(\theta_{k_i})] - L_i(\theta_i^*)\right] \notag                                              \\
     & = \underbrace{\frac{1}{M}\sum_{i=1}^M \left[L_i(\theta_{k_i}^*) - L_i(\theta_i^*)\right]}_{\text{Pareto coverage gap}} \notag            \\
     & \quad + \underbrace{\frac{1}{M}\sum_{i=1}^M \left[\mathbb{E}[L_i(\theta_{k_i})] - L_i(\theta_{k_i}^*)\right]}_{\text{statistical error}}
\end{align}
where $k_i = \arg\min_k L_i(\theta_k)$ for each client $i$.

Combining Lemmas~\ref{lem:pareto_bound} and~\ref{lem:statistical}:
\begin{align}
     & \frac{1}{M}\sum_{i=1}^M \left\{\mathbb{E}\left[\min_{k \in [K]} L_i(\theta_k)\right] - L_i(\theta_i^*)\right\} \notag \\
     & \leq \frac{M-K}{M} \cdot \Delta_{het} + \mathcal{O}\left(\sqrt{\frac{Kd}{n}}\right)
\end{align}

This completes the proof of Theorem~\ref{thm:convergence}.

\begin{remark}[Interpretation of the Bound]
    The bound reveals a fundamental trade-off in the K-for-M framework:
    \begin{itemize}
        \item \textbf{Pareto coverage gap} $(M-K)/M \cdot \Delta_{het}$: Decreases with $K$, vanishes when $K=M$
        \item \textbf{Statistical error} $\mathcal{O}(\sqrt{Kd/n})$: Increases with $K$ due to larger hypothesis class
        \item \textbf{Optimal $K$}: Balances model expressiveness against sample efficiency
        \item \textbf{Asymptotic behavior}: As $n \to \infty$, statistical error vanishes, leaving only the coverage gap
    \end{itemize}
\end{remark}

\begin{table*}[t]
    \centering
    \caption{Training hyperparameters for benchmark and medical datasets. Benchmark datasets use 2000 rounds with CNN/TextCNN backbones, while medical datasets use 1000 rounds with ResNet-18 to prevent overfitting on smaller institutional samples. All experiments employ single local epoch and full client participation for fair comparison across methods.}
    \label{tab:hyperparameters}
    \begin{tabular}{llcccccc}
        \toprule
        \textbf{Type}              & \textbf{Dataset} & \textbf{Model} & \textbf{Rounds} & \textbf{Local Epochs} & \textbf{Batch Size} & \textbf{Learning Rate} & \textbf{Join Ratio} \\
        \midrule
        \multirow{5}{*}{Benchmark} & CIFAR-10         & CNN            & 2000            & 1                     & 50                  & 0.005                  & 1.0                 \\
                                   & CIFAR-100        & CNN            & 2000            & 1                     & 50                  & 0.005                  & 1.0                 \\
                                   & TinyImageNet     & CNN            & 2000            & 1                     & 50                  & 0.0005                 & 1.0                 \\
                                   & AG News          & TextCNN        & 2000            & 1                     & 100                 & 0.005                  & 1.0                 \\
                                   & FEMNIST          & CNN            & 2000            & 1                     & 100                 & 0.005                  & 1.0                 \\
        \hline
        \multirow{2}{*}{Medical}   & Kvasir           & ResNet-18      & 1000            & 1                     & 100                 & 0.002                  & 1.0                 \\
                                   & FedISIC          & ResNet-18      & 1000            & 1                     & 50                  & 0.005                  & 1.0                 \\
        \bottomrule
    \end{tabular}
\end{table*}

\subsection{Theorem~\ref{thm:smooth_error}: Smooth Approximation}
\label{sec:proof_approx}

We prove that $g^{\text{STCH-Set}}(\Theta_K)$ uniformly approximates $g^{\text{TCH-Set}}(\Theta_K)$ by deriving tight upper and lower bounds using standard log-sum-exp approximation properties.

\begin{proof}
    The log-sum-exp function provides well-known smooth approximations for $\max$ and $\min$ operators~\cite{bertsekas2003convex, boyd2004convex}. For any $y_1, \ldots, y_n$ and smoothing parameter $\mu > 0$:
    \begin{align}
                      & \mu \log \sum_{i=1}^n e^{y_i / \mu} - \mu \log{n} \notag                    \\
        \leq{} \,\,\, & \max \{y_1, \ldots, y_n\} \notag                                            \\
        \leq{} \,\,\, & \mu \log \sum_{i=1}^n e^{y_i/\mu}, \label{eq:lse_max_bound}                 \\[4pt]
                      & -\mu \log \sum_{i=1}^n e^{-y_i / \mu} \notag                                \\
        \leq{} \,\,\, & \min \{y_1, \ldots, y_n\} \notag                                            \\
        \leq{} \,\,\, & -\mu \log \sum_{i=1}^n e^{-y_i/\mu} + \mu \log{n}. \label{eq:lse_min_bound}
    \end{align}

    For $g^{\text{TCH-Set}}(\Theta_K) = \max_{i \in [M]} \min_{k \in [K]} L_i(\theta_k)$, we apply~\eqref{eq:lse_min_bound} to the inner minimization and~\eqref{eq:lse_max_bound} to the outer maximization. Define the smooth inner minimum:
    \begin{equation}
        \tilde{m}_i := -\mu \log \left(\sum_{k=1}^K e^{-L_i(\theta_k)/\mu} \right).
    \end{equation}

    By~\eqref{eq:lse_min_bound}, we have $\tilde{m}_i - \mu \log K \leq \min_{k \in [K]} L_i(\theta_k) \leq \tilde{m}_i$. Applying~\eqref{eq:lse_max_bound} to $\max_{i \in [M]} \tilde{m}_i$ and noting that $g^{\text{STCH-Set}}(\Theta_K) = \mu \log (\sum_{i=1}^M e^{\tilde{m}_i/\mu})$:
    \begin{align}
        g^{\text{TCH-Set}}(\Theta_K)
         & = \max_{i \in [M]} \min_{k \in [K]} L_i(\theta_k) \nonumber                                  \\
         & \geq \max_{i \in [M]} (\tilde{m}_i - \mu \log K) \nonumber                                   \\
         & \geq g^{\text{STCH-Set}}(\Theta_K) - \mu \log M - \mu \log K, \label{eq:lower_bound}         \\[4pt]
        g^{\text{TCH-Set}}(\Theta_K)
         & \leq \max_{i \in [M]} \tilde{m}_i \leq g^{\text{STCH-Set}}(\Theta_K). \label{eq:upper_bound}
    \end{align}

    Combining~\eqref{eq:lower_bound} and~\eqref{eq:upper_bound} yields the uniform approximation bound with error $\mathcal{O}(\mu \log M + \mu \log K)$, which vanishes as $\mu \to 0$.
\end{proof}

\subsection{Theorem~\ref{thm:pareto_properties}: Pareto Properties}
\label{sec:proof_pareto}

We prove both parts of Theorem~\ref{thm:pareto_properties}: Pareto optimality and Pareto stationarity.

\subsubsection{Part 1: Pareto Optimality}

\begin{proof}
    The smooth Tchebycheff set scalarization objective:
    \begin{equation}
        g^{\text{STCH-Set}}(\Theta_K) = \mu \log \sum_{i=1}^M \left(\sum_{k=1}^K \exp\left(-\frac{L_i(\theta_k)}{\mu}\right)\right)^{-1}
    \end{equation}

    Let $\Theta^*_K = \{\theta_1^*, \ldots, \theta_K^*\}$ be an optimal solution set. We need to show that each $\theta_k^* \in \Theta^*_K$ is Pareto optimal.

    \textbf{Part 1: Weak Pareto Optimality.}
    Suppose for contradiction that $\Theta^*_K$ is not weakly Pareto optimal. Then there exists another set $\Theta'_K$ such that $L_i(\theta'_{k(i)}) < L_i(\theta^*_{k(i)})$ for all $i \in [M]$, where $k(i) = \arg\min_k L_i(\theta_k)$.

    Since all objectives strictly decrease:
    \begin{align}
        \sum_{k=1}^K \exp\left(-\frac{L_i(\theta'_k)}{\mu}\right) & > \sum_{k=1}^K \exp\left(-\frac{L_i(\theta^*_k)}{\mu}\right) \quad \forall i
    \end{align}

    This implies:
    \begin{align}
        \left(\sum_{k=1}^K \exp\left(-\frac{L_i(\theta'_k)}{\mu}\right)\right)^{-1} & < \left(\sum_{k=1}^K \exp\left(-\frac{L_i(\theta^*_k)}{\mu}\right)\right)^{-1} \quad \forall i
    \end{align}

    Therefore $g^{\text{STCH-Set}}(\Theta'_K) < g^{\text{STCH-Set}}(\Theta^*_K)$, contradicting the optimality of $\Theta^*_K$.

    \textbf{Strong Pareto Optimality.}
    Under either condition (unique optimal set or all positive preferences), we can strengthen the result to Pareto optimality. When the optimal set is unique, any Pareto-dominating solution would yield a strictly better objective value, contradicting uniqueness. When all preferences are positive, the scalarization ensures that improving any subset of objectives without harming others strictly decreases the objective, again contradicting optimality.
\end{proof}

\subsubsection{Part 2: Pareto Stationarity}

We prove that stationary points of STCH-Set are Pareto stationary for the original multi-objective problem.

\begin{proof}
    Consider a point $\hat{\Theta} = \{\hat{\theta}_1, \ldots, \hat{\theta}_K\}$ where gradient descent has converged. The gradient of STCH-Set with respect to model $\hat{\theta}_k$ is:
    \begin{equation}
        \nabla_{\hat{\theta}_k} g^{\text{STCH-Set}} = \sum_{i=1}^M \alpha_i \cdot w_{ik} \cdot \nabla_{\hat{\theta}_k} L_i(\hat{\theta}_k)
    \end{equation}

    where:
    \begin{align}
        w_{ik}   & = \frac{\exp(-L_i(\hat{\theta}_k)/\mu)}{\sum_{j=1}^K \exp(-L_i(\hat{\theta}_j)/\mu)} \\
        \alpha_i & = \frac{S_i^{-1}}{\sum_{j=1}^M S_j^{-1}}
    \end{align}

    At a stationary point where $\nabla_{\hat{\theta}_k} g^{\text{STCH-Set}} = 0$ for all $k$, we have:
    \begin{equation}
        \sum_{i=1}^M \bar{w}_i \nabla_{\hat{\theta}_k} L_i(\hat{\theta}_k) = 0
    \end{equation}

    where $\bar{w}_i = \alpha_i \cdot w_{ik} \geq 0$ and $\sum_i \bar{w}_i = 1$ (forms a convex combination).

    This is precisely the Pareto stationarity condition: the zero vector can be expressed as a convex combination of the individual gradients, meaning no common descent direction exists that improves all objectives simultaneously.
\end{proof}

\section{Experimental Details}
\label{sec:algorithm_suppl}

\begin{figure*}[t]
    \begin{minipage}[t]{0.62\textwidth}
        \centering
        \includegraphics[width=\textwidth]{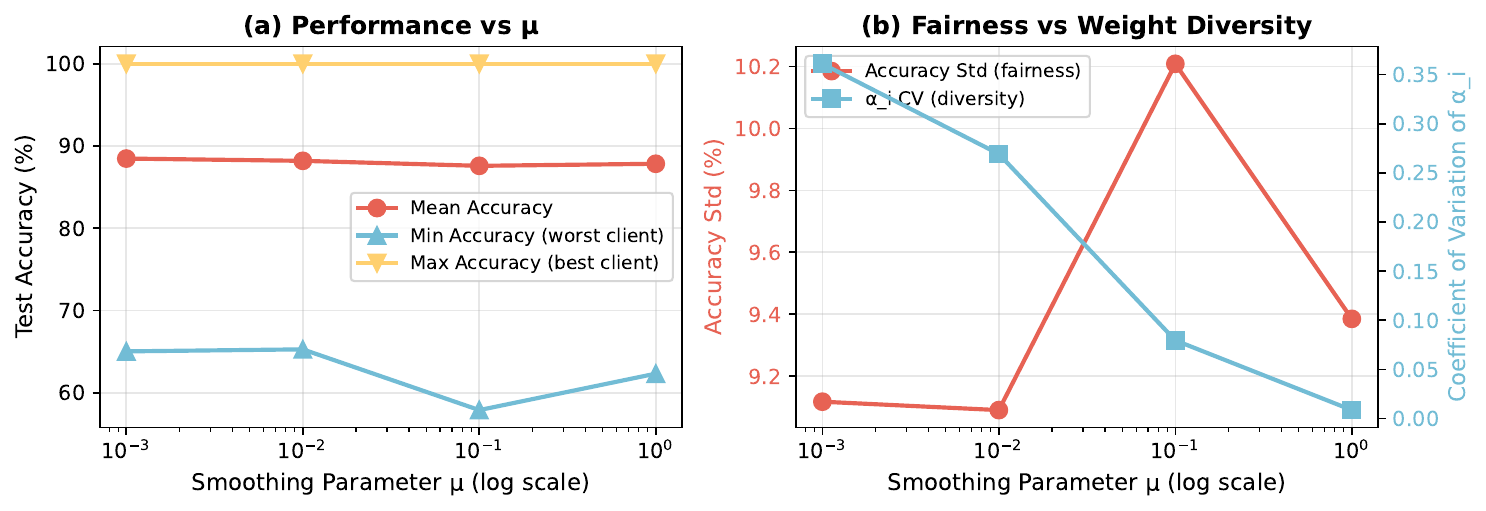}
        \captionof{figure}{\textbf{Fairness and weight diversity analysis.} (a) Performance metrics across different $\mu$ values show that mean accuracy is relatively stable, but worst-case (minimum) accuracy drops significantly at $\mu=0.1$, suggesting a phase transition region. (b) The relationship between fairness (accuracy standard deviation, red) and outer weight diversity (coefficient of variation of $\alpha_i$, blue) reveals that both extreme values ($\mu \to 0$ and $\mu \to \infty$) achieve better fairness than intermediate values, with outer weight diversity decreasing monotonically as $\mu$ increases.}
        \label{fig:mu_fairness_suppl}
    \end{minipage}
    \hfill
    \begin{minipage}[t]{0.32\textwidth}
        \centering
        \includegraphics[width=\textwidth]{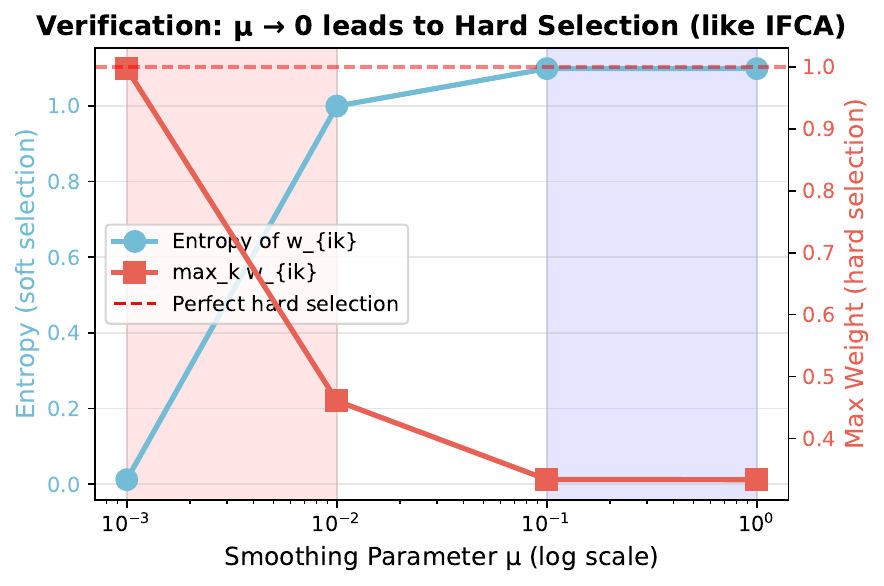}
        \captionof{figure}{\textbf{Theoretical verification: $\mu \to 0$ recovers hard clustering.} Left axis (blue): entropy of inner weights $w_{ik}$ increases with $\mu$, indicating softer model selection. Right axis (red): maximum inner weight decreases with $\mu$, moving away from one-hot assignments. The shaded regions indicate hard selection ($\mu < 0.01$, red) and soft selection ($\mu > 0.1$, blue) regimes.}
        \label{fig:mu_theory_verification_suppl}
    \end{minipage}
    \vspace{-1\baselineskip}
\end{figure*}

\begin{figure*}[t]
    \centering
    \includegraphics[width=0.88\textwidth]{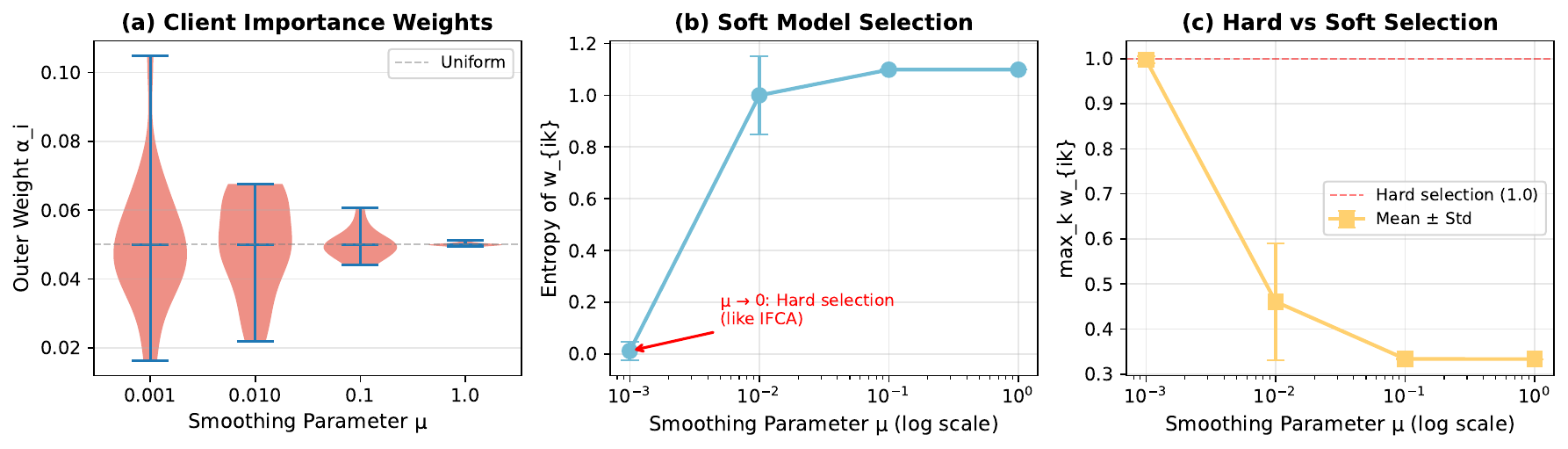}
    \vspace{-0.5\baselineskip}
    \caption{\textbf{Impact of $\mu$ on dual-layer weights.} (a) Outer weights $\alpha_i$ distribution across clients. (b) Inner weight entropy (higher = softer selection). (c) Max inner weight (closer to 1 = harder selection). As $\mu \to 0$, FedFew recovers hard clustering like IFCA.}
    \label{fig:ablation_mu_weights}
    \vspace{-1.0\baselineskip}
\end{figure*}

\subsection{Training Hyperparameters}

Table~\ref{tab:hyperparameters} presents the complete training configurations for all datasets evaluated in our experiments. We categorize the datasets into benchmark and medical imaging domains, each requiring tailored hyperparameter settings due to their distinct characteristics.

\textbf{Benchmark datasets.}
We evaluate on five diverse benchmark datasets (CIFAR-10, CIFAR-100, TinyImageNet, AG News, FEMNIST) spanning vision and text domains.
All benchmark datasets use 2000 communication rounds to ensure convergence across heterogeneous client distributions. For vision datasets, CIFAR-10 and CIFAR-100 use CNN backbones with batch size 50 and learning rate 0.005, balancing training stability with limited samples per client.
TinyImageNet employs the same CNN architecture and batch size, but uses a reduced learning rate of 0.0005 to accommodate its higher resolution (64$\times$64) and larger number of classes (200).
For text classification, AG News uses TextCNN with batch size 100 and learning rate 0.005, leveraging vocabulary diversity to mitigate overfitting. FEMNIST adopts CNN with batch size 100 and learning rate 0.005, benefiting from natural user partitioning that reduces overfitting tendencies.

\textbf{Medical datasets.}
We include two medical imaging datasets (Kvasir, FedISIC) representing real-world healthcare scenarios.
Both use ResNet-18 backbones and 1000 communication rounds, as medical data exhibits faster convergence and higher overfitting risks due to smaller sample sizes per institution.
Kvasir, focusing on gastrointestinal disease classification, uses batch size 100 and a conservative learning rate of 0.002 to handle fine-grained categories while exploiting data augmentation.
FedISIC, dealing with skin lesion classification from small medical centers, adopts batch size 50 and learning rate 0.005 to prevent overfitting on limited training samples.

\textbf{Common settings.}
Across all datasets, we fix local epochs to 1 and join ratio to 1.0 (full client participation) to ensure fair comparison between different personalized FL methods. These settings align with standard practices in federated learning benchmarks.

\textbf{Algorithm-specific hyperparameters.}
For FedFew and IFCA, we use $K=3$ server models across all experiments, which provides a good balance between model expressiveness and optimization complexity as validated in Section~\ref{subsec:ablation_K}. For FedFew specifically, we set the smoothing parameter $\mu=0.01$ for the STCH-Set objective, which enables effective soft model selection while maintaining stable optimization (see Section~\ref{sec:mu_ablation_suppl} for sensitivity analysis).

\subsection{Sensitivity Analysis on Smoothing Parameter}
\label{sec:mu_ablation_suppl}

We investigate the impact of the smoothing parameter $\mu$ on both the dual-layer weight mechanism and overall performance. Figure~\ref{fig:ablation_mu_weights} illustrates how $\mu$ controls the balance between hard and soft model selection. As theory predicts, when $\mu \to 0$, the inner weights $w_{ik}$ approach one-hot assignments (entropy $\approx 0.012$, max weight $\approx 0.997$), recovering IFCA-style hard clustering. Conversely, for large $\mu$ (e.g., $\mu=1.0$), the weights become nearly uniform (entropy $\approx 1.099 \approx \log 3$, max weight $\approx 0.333$), enabling soft model selection. The outer weights $\alpha_i$ exhibit complementary behavior: smaller $\mu$ values lead to more diverse client importance weights (CV = 0.361 at $\mu=0.001$), emphasizing adaptive up-weighting of harder clients, while larger $\mu$ yields nearly uniform weighting (CV = 0.009 at $\mu=1.0$).
On accuracy, we find that performance remains relatively stable across different $\mu$ values.

We provide additional analysis on the impact of the smoothing parameter $\mu$ beyond what is presented in the main paper. Figure~\ref{fig:mu_fairness_suppl} examines the relationship between weight diversity and fairness. Interestingly, both very small and very large $\mu$ values achieve similar fairness (accuracy std $\approx 9.1$--$9.4\%$), while the intermediate region ($\mu=0.1$) exhibits significantly worse fairness (std $\approx 10.2\%$). This suggests a phase transition phenomenon: when $\mu$ is neither small enough for stable hard clustering nor large enough for effective soft selection, the optimization becomes unstable. The outer weight diversity (measured by coefficient of variation of $\alpha_i$) is highest at small $\mu$ (CV = 0.36), indicating strong differentiation of client importance, and decreases monotonically as $\mu$ increases, approaching uniform weighting (CV = 0.009) at $\mu=1.0$.

Figure~\ref{fig:mu_theory_verification_suppl} provides theoretical verification that $\mu \to 0$ recovers hard clustering behavior (similar to IFCA), while large $\mu$ enables soft model selection. The entropy of inner weights $w_{ik}$ decreases from $\approx 1.099$ (uniform distribution over K=3 models, corresponding to $\log 3$) at $\mu=1.0$ to nearly zero at $\mu=0.001$, while the maximum weight increases from $\approx 0.333$ (uniform) to $\approx 0.997$ (one-hot). This validates our theoretical prediction that the smoothing parameter interpolates between soft and hard selection regimes.

\subsection{Communication Efficiency: Alternative Perspective}
\label{sec:rounds_ablation_suppl}

The main paper presents communication-computation trade-offs by plotting convergence against total local updates (Figure~\ref{fig:ablation_rounds}). Here we provide complementary analysis from the communication efficiency perspective.

\bheading{Convergence vs communication rounds.}
Figure~\ref{fig:rounds_comm_suppl} re-plots the same convergence data against communication rounds rather than total updates. This perspective reveals the dramatic communication savings: while all configurations perform identical total computation (2000 local updates), LE=16 achieves convergence in merely 125 communication rounds compared to 2000 rounds for LE=1---a $16\times$ reduction in network overhead. The convergence curves show that configurations with more local epochs not only reduce communication frequency but also exhibit smoother optimization trajectories, with LE=16 demonstrating the steepest and most stable descent in $g^{\text{STCH-Set}}$ values.

\bheading{Accuracy stability across configurations.}
As shown in the main paper (Figure~\ref{fig:rounds_accuracy_suppl}), despite the 16-fold difference in communication costs between LE=1 and LE=16, mean accuracies remain tightly clustered within 87.8--88.3\%, with a maximum deviation of only 0.5 percentage points. This stability validates two key properties of our STCH-Set optimization: (1) robustness to different synchronization frequencies, and (2) insensitivity to the specific (local epochs, communication rounds) decomposition as long as total computation remains constant. The slight accuracy variation (LE=2 achieves 88.3\% while LE=16 achieves 87.8\%) is practically negligible compared to the substantial communication savings, making LE=8 or LE=16 compelling choices for bandwidth-constrained federated deployments.

\begin{figure}[t]
    \centering
    \includegraphics[width=0.44\textwidth]{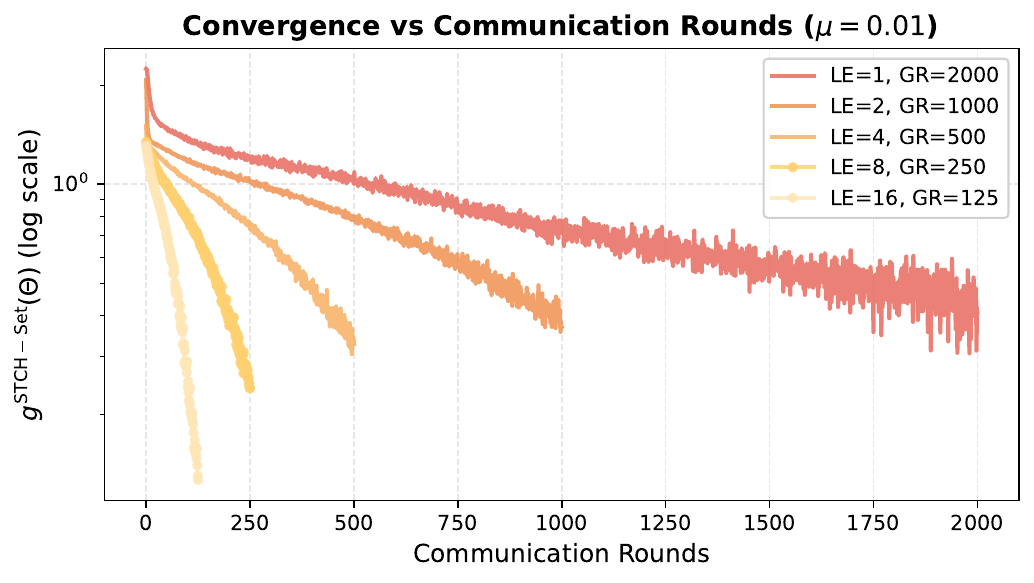}
    \caption{\textbf{Convergence vs communication rounds.} Re-plotting the main paper data against communication rounds instead of total updates reveals the communication efficiency perspective: LE=16 converges in 125 rounds while LE=1 requires 2000 rounds, achieving $16\times$ communication reduction with comparable final $g^{\text{STCH-Set}}$ values.}
    \label{fig:rounds_comm_suppl}
\end{figure}

\subsection{Fairness Analysis}
\label{sec:fairness_suppl}

To evaluate the fairness of personalization across clients, we compute Jain's Fairness Index on per-client test accuracies. A higher index (maximum $J=1$) indicates more equitable performance across clients.

\begin{table}[h]
    \centering
    \caption{Jain's Fairness Index (higher is better, $J=1$ is perfect fairness).}
    \label{tab:jain_suppl}
    \vspace{-0.5\baselineskip}
    \resizebox{0.46\textwidth}{!}{%
        \begin{tabular}{l ccc ccc cc}
            \toprule
                            & \multicolumn{3}{c}{\textbf{CIFAR-10}} & \multicolumn{3}{c}{\textbf{CIFAR-100}} & \multicolumn{2}{c}{\textbf{Medical}}                                                                                      \\
            \cmidrule(lr){2-4} \cmidrule(lr){5-7} \cmidrule(lr){8-9}
            \textbf{Method} & Dir-10                                & Dir-20                                 & Pat-10                               & Dir-10         & Dir-20         & Pat-20         & Kvasir         & FedISIC        \\
            \midrule
            FedAvg          & 0.981                                 & 0.981                                  & 0.982                                & 0.995          & 0.982          & 0.977          & \textbf{0.999} & 0.945          \\
            FedProx         & 0.984                                 & 0.977                                  & 0.976                                & 0.995          & 0.983          & 0.982          & 0.981          & 0.924          \\
            APFL            & 0.992                                 & 0.985                                  & 0.996                                & 0.995          & 0.988          & 0.996          & 0.994          & 0.950          \\
            Ditto           & 0.984                                 & 0.982                                  & 0.997                                & 0.994          & 0.987          & 0.996          & 0.994          & 0.951          \\
            FedRep          & 0.990                                 & 0.985                                  & 0.997                                & 0.995          & 0.991          & 0.996          & 0.995          & 0.938          \\
            IFCA            & 0.992                                 & 0.974                                  & 0.985                                & 0.973          & 0.984          & 0.993          & 0.934          & 0.873          \\
            \textbf{FedFew} & \textbf{0.992}                        & \textbf{0.990}                         & 0.997                                & \textbf{0.996} & \textbf{0.992} & \textbf{0.997} & 0.996          & \textbf{0.958} \\
            \bottomrule
        \end{tabular}
    }
\end{table}

Table~\ref{tab:jain_suppl} shows that FedFew achieves the highest or near-highest Jain's Fairness Index across most settings. Notably, FedFew consistently outperforms IFCA (the other multi-model baseline) in fairness, demonstrating that the soft model selection via STCH-Set provides more equitable personalization than hard clustering.

\subsection{Additional Ablation on $K$: AG News and Kvasir}
\label{sec:vary_k_additional_suppl}

To complement the $K$ ablation on CIFAR-10 in the main paper, we conduct additional experiments on AG News ($K \in \{1, \ldots, 5\}$) and Kvasir ($K \in \{1, \ldots, 5\}$) to investigate whether the optimal $K$ depends on dataset characteristics.

\begin{table}[h]
    \centering
    \caption{Test accuracy (\%) vs.\ number of server models $K$ on AG News and Kvasir datasets.}
    \label{tab:vary_k_agnews_kvasir_suppl}
    \vspace{-0.5\baselineskip}
    \resizebox{0.38\textwidth}{!}{%
        \begin{tabular}{c|ccc|ccc}
            \toprule
                & \multicolumn{3}{c|}{AG News} & \multicolumn{3}{c}{Kvasir}                                                                   \\
            $K$ & Min                          & Mean                       & Max            & Min           & Mean          & Max            \\
            \midrule
            1   & 84.5                         & 96.4                       & 100.0          & 79.5          & 91.6          & 99.5           \\
            2   & 73.8                         & 95.7                       & 100.0          & 82.4          & 92.2          & 99.8           \\
            3   & 83.9                         & 96.0                       & 100.0          & \textbf{83.9} & \textbf{92.9} & 100.0          \\
            4   & 78.0                         & 95.7                       & 100.0          & 82.9          & 92.6          & 100.0          \\
            5   & \textbf{86.3}                & \textbf{96.2}              & \textbf{100.0} & 81.2          & 92.5          & \textbf{100.0} \\
            \bottomrule
        \end{tabular}
    }
\end{table}

Table~\ref{tab:vary_k_agnews_kvasir_suppl} confirms that the optimal $K$ depends on dataset characteristics: Kvasir peaks at $K=3$, while AG News peaks at $K=5$. This aligns with Theorem~\ref{thm:convergence}, which predicts that the optimal $K$ balances the Pareto coverage gap (favoring larger $K$) against statistical error (favoring smaller $K$). Datasets with more heterogeneous client distributions benefit from a larger $K$ to adequately cover the Pareto front.

\subsection{Cost Analysis}
\label{sec:cost_suppl}

We analyze the computational and communication overhead of maintaining $K$ server models compared to single-model approaches.

\begin{itemize}
    \item \textbf{Training}: Clients compute gradients for $K$ models, resulting in a $K\times$ increase in local computation per round. For $K=3$, this represents a modest $3\times$ overhead.
    \item \textbf{Inference}: Each client uses only its best-matching model (determined by $w_{ik}$), so inference cost is identical to single-model methods.
    \item \textbf{Communication}: The overhead scales linearly with $K$ (a constant factor), not with $M$. For $K=3$ and $M=20$, FedFew achieves $>6\times$ reduction in server-side model storage compared to maintaining $M$ personalized models.
    \item \textbf{Server storage}: The server maintains $K$ models instead of $M$, yielding an $M/K$ storage reduction factor.
\end{itemize}

Given that $K \ll M$ and inference cost is unchanged, the trade-off is favorable: a modest increase in training computation yields significant personalization improvements with reduced server-side storage.

\end{document}